\newtheorem{example}{Example}
\newcommand{\ie}{\mbox{i.e.}}
\newcommand{\eg}{\mbox{e.g.}}
\newcommand{\cf}{\mbox{cf.}}
\newcommand{\wrt}{\mbox{w.r.t.}}
\renewcommand{\st}{\mbox{s.t.}}
\newcommand{\defined}{\ensuremath{\equiv_{\text{def}}}}
\newtheorem{definition}{Definition}
\newtheorem{proposition}{Proposition}
\newtheorem{corollary}{Corollary}
\newcommand{\KB}{\ensuremath{\mathcal{K}}} 
\newcommand{\limp}{\rightarrow} 
\newcommand{\sat}{\Vdash}
\newcommand{\I}{\ensuremath{\mathcal{I}}}
\newcommand{\A}{\ensuremath{\mathcal{A}}}
\newcommand{\B}{\ensuremath{\mathcal{B}}}
\newcommand{\C}[2]{C_{#1}(#2)}  
\newcommand{\lang}{\ensuremath{\mathcal{L}}}
\newcommand{\cond}{\ensuremath{\Rightarrow}}
\renewcommand{\C}{\ensuremath{\mathcal{C}}}
\newcommand{\clent}{\vDash }
\newcommand{\Sleq}{\ensuremath{\preceq}}
\newcommand{\nd}{\noindent}
\newcommand{\ii}[1]{\mbox{$(#1)$}}
\newif\ifmycomment
\newcommand{\mycomment}[1]{\vspace*{3mm}{ 
\ifmycomment
\fbox{
\begin{minipage}{0.875\columnwidth}
\scriptsize {\bf Comment}: #1 
\end{minipage}}
\vspace{3ex}
\fi
}}
\title{A General Framework for Modelling Conditional Reasoning - \\ Preliminary Report
}
\author{Giovanni Casini$^{1,2}$ \and Umberto Straccia$^1$}
\date{%
    $^1$ISTI - CNR, Pisa, Italy\\%
    $^2$CAIR, University of Cape Town, Cape town, South Africa\\[2ex]%
    \{giovanni.casini, umberto.straccia\}@isti.cnr.it
    }
\begin{document}

\maketitle

\begin{abstract}
We introduce and investigate here a formalisation for conditionals that allows the definition of a broad class of  reasoning systems. This framework covers the most popular kinds of conditional reasoning  in logic-based KR:  the semantics we propose is appropriate for a structural analysis of those conditionals that do not satisfy closure properties associated to classical logics. 

\end{abstract}

\section{Introduction}\label{sect_intro}

Conditionals are generally considered the backbone of human (and AI) reasoning: the ``if-then" connection between two propositions is the stepping stone of  arguments and a lot of the research effort in formal logic has focused on this kind of connection. 
A conditional connection satisfies different properties according to the kind of arguments it is used for. The classical material implication is appropriate for modelling the ``if-then" connection as it is used in Mathematics, but the equivalence between the material implication $A\limp B$ and $\neg A\lor B$ is not appropriate for many other contexts. 
 Different kinds of reasoning use different kinds of conditionals, modelling, among others, \emph{presumptive reasoning} (\eg~``Birds typically fly"), \emph{normative reasoning} (\eg~``if you have had alcohol, you should not drive"), \emph{casual reasoning} (\eg~``if you throw a stone against that window, then you will break it"), \emph{probabilistic reasoning} (\eg~``if you go out in this weather, you will probably get a cold"), \emph{fuzzy reasoning} (\eg~``if the temperature is hot, then the fan speed is high"), or \emph{counterfactual reasoning} (\eg~``if I were you, I wouldn't do that").

In different contexts we associate to the ``if-then" expressions distinct modalities, each of them validating different argumentation patterns. A common way of formalising different reasoning patterns that are or are not endorsed in a specific reasoning context is through \emph{structural properties}. That is, formal constraints specifying that a set of conditionals is closed under certain reasoning patterns. 
This kind of analysis was used already in classical logic, as the class of Tarskian logical consequence relations have been characterised in terms of three main properties:

\begin{description}
\item[Reflexivity:] $A\clent A \vspace*{2em} \hspace{0.3cm} \text{(Ref)} $

\vspace{-0.4cm}
\item[Monotonicity:]
{\Large$
\frac{ A\clent C,  \ \clent B\rightarrow A}{ B\clent C }$} \hspace{0.3cm} \text{(Mon)}

\item[Cut:]
{\Large$
\frac{ A\land B\clent C,  \ A\clent B}{ A\clent C}$} \hspace{0.3cm} \text{(Cut)} \ .

\end{description}

%

    
%
\nd Referring to structural properties in analysing conditional logics has become a standard in some areas~\cite{Gabbay1995,Makinson1994,Makinson2000}. However, let us note that while some properties may appear obvious in everyday reasoning, these may become in fact undesirable depending on the reasoning context in which we apply them. For example, a property like 
\begin{description}
\item[Right Conjunction:]
{\Large$
\frac{ A\cond B,  \ A\cond C }{ A\cond (B\land C)}$} \hspace{0.3cm} \text{(And)}

\end{description}
%
%
\nd dictates that if an agent believes ``if $A$ then $B$" and ``if $A$ then $C$", then it should also believe that  ``if $A$ then $B$ and $C$" ($\cond$ stands for conditional implication). For instance, if an agent believes that presumably birds fly and that presumably winter days are cold, it is reasonable to require for a rational agent to abide to the (And) property, and, thus, to believe the conjunction of the two, \ie~presumably birds fly \emph{and} winter days are cold.

While the (And) property is required in presumptive reasoning, it is not considered appropriate for other kinds of reasoning, as, for example, in a probabilistic context~\cite{HawthorneMakinson2007} or in deontic reasoning. In the latter case, in some kind of normative reasoning involving incompatible preferences, (And) is not a desirable reasoning pattern: an agent could believe $\mathtt{saturday}\cond \mathtt{party}$ (``On Saturday night  I would like to go to a party'') and $\mathtt{saturday}\cond \mathtt{tv}$ (``On Saturday night  I would like to stay home watching TV''), but not $\mathtt{saturday}\cond \mathtt{party}\land \mathtt{tv}$ (``On Saturday night  I would like to go to a party and to stay home watching TV'').
 
 Another property that is usually satisfied in most of the reasoning contexts is


\begin{description}
\item[Right Weakening:]
{\Large$
\frac{ A\cond B,  \ \clent B\rightarrow C }{ A\cond C }$} \hspace{0.3cm} \text{(RW)} \ .
\end{description}

%
\nd (RW) simply states that if an agent believes ``if $A$ then $B$", then it believes also ``if $A$ then $C$" for any (classical) consequence $C$ of $B$. For example, it is reasonable to impose that believing that presumably birds fly implies also believing that presumably birds move, since flying implies moving. 

However, there are contexts in which (RW) gives back counter-intuitive results, as in some forms of deontic and causal reasoning \cite{CasiniEtAl2019}, as illustrated by the following examples:
\vspace{-0.1cm}
\begin{itemize}
    \item ``if you are involved in a car accident, you should remain on the spot'' is an acceptable norm, but ``if you are involved in a car accident, you should remain on the spot or paint yourself in blue'' is not as acceptable;

\vspace{-0.1cm}
    \item ``if you turn the wheel of a moving car, the car will move in a circle'' is meaningful, while  ``If you turn the wheel of a moving car, the car will move'' is not really that meaningful;

\vspace{-0.1cm}
    \item ``if you throw a stone against the window, it will break'' is meaningful, but ``If you throw a stone against the window, it will break or Ann will drink tea'' is not.
\end{itemize}


\vspace{-0.1cm}
\nd (RW) is a property that is strongly connected to the traditional semantics that is used to formalise conditional reasoning, \ie~possible-worlds semantics. In fact, most formalisations of conditional reasoning have been built using a possible-worlds semantics by referring more or less directly to classical modal operators. 
Using such an  approach it has been possible to define logical systems modelling various kinds of non-classical reasoning.

On the other hand, relying on possible worlds means relying on closed logical theories, 
and such an approach enforces some properties (\eg\ logical omniscience) that may be in conflict with some modelling goals. Some works have already considered ways of combining a possible world approach with some constrained forms of (RW)~\cite{CasiniEtAl2019,Rott89}. Let us anticipated that, in contrast to those approaches, we will consider here a kind of intentional semantics instead.

One limit of the possible-worlds approach to the formalisation of conditionals 
``if \emph{condition} $C$ holds, then \emph{effect} $D$ holds with a given \emph{modality}" 
is that it accounts for the modality that is associated with the truth of~$D$ given the truth of~$C$. However, it does not account for whether the truth of~$D$ given the truth of~$C$ has any relevance for the kind of reasoning we are considering. 
The centrality of the notion of \emph{relevance} in conditional reasoning has already been pointed out in~\cite{Delgrande2011}. However what `relevance' means in the context of conditional reasoning remains still vague nowadays. 

As we are going to show in the next section, our formalisation focuses on choice functions that model what the \emph{agent considers as relevant effects and relevant conditions}. Our work is somewhat inspired by~\cite{Rott2001} that also suggested the use of choice functions in modelling the semantics of conditionals.

The paper is organised as follows. In the next section we introduce some background concepts we will rely on in our formalisation of conditionals. In Section~\ref{sect_sem} we illustrate our formalisation of conditionals, while Section~\ref{sect_structural} describes how we may accommodate various structural properties within our approach. Section \ref{sect_ent} discusses how to formalise entailment relations in our framework and shows possible future developments. Eventually, Section~\ref{concl} summarises our contribution.



\section{Preliminaries}\label{sect_pre}

\nd We use a  conditional language containing  conditionals of the form $C\cond D$. We do not consider here the possibility of nesting the conditionals or combining them via propositional operators. 

Let  $\lang$ be a finitely generated propositional language, with logical connectives $\neg,\lor,\land,\limp$ and $\leftrightarrow$ and propositional symbol $\bot$ having usual meaning. Capital letters $A,B,\ldots$ will be used to refer to propositions, while $\A,\B,\ldots$ will refer to sets of propositions. 
%
With $\clent$  we denote the classical propositional consequence relation. 

Our language will be $\lang_{\cond}$, the conditional language built on top of $\lang$: namely,
\[
\lang_{\cond}\defined\{C\cond D\mid C,D\in\lang\} \ .
\]

\nd On the semantics side we will use a relation
$\leq\subseteq\lang\times\lang$ among propositional formulae, where $A\leq B$ iff $\clent A\limp B$, so that $\leq$ generates the classical lattice semantics over propositional formulas, with $\lor$ and $\land$ represented by the \emph{join} and \emph{meet} operations, respectively. 
The relations $<$ and $\equiv$ are defined as usual from $\leq$. Note that, using $\leq$ as a representation of $\limp$, $A<B$ represents $A\limp B$ and $\neg (B\limp A)$, while $A\equiv B$ is a representative of $A\leftrightarrow B$. Of course, $\leq$ is reflexive and transitive.



With $\min_{\leq}(\A)$ we denote the \emph{minimal} elements in $\A$ \wrt~$\leq$, \ie~
$
\min_{\leq}(\A) \defined \{ B \in \A | \not \!\exists C\in \A~\st~C < B\}$, while $\A^{\uparrow}  \defined  \{B\mid A\leq B\text{ for some }A\in\A\}$ and $
 \A^{\downarrow}  \defined  \{B\mid B\leq A\text{ for some }A\in\A\}$ (we will write $A^{\uparrow}, A^{\downarrow}$  for $\{A\}^{\uparrow}, \{A\}^{\downarrow}$).



\nd We are going to use a well-known order among sets of formulae, based on $\leq$: the \emph{Smyth} order $\Sleq$ over power sets (see, \eg~\cite[Section 3]{Straccia09} for a short introduction).\footnote{Orders of this type are often used in the context of so-called \emph{power domains}~\cite{Knijnenburg93,knijnenburg96,Plotkin76,Smyth78,Winskel85}.}
Specifically, 
\[
\A \Sleq \B \text{ iff } \forall B\in\B \ \exists A\in\A \ \st~ A\leq B \ .
\]
\nd We also write $\A \cong \B$ iff $\A \Sleq \B$ and $\B \Sleq \A$.

A \emph{choice} function is a set-valued function 
$h\colon\lang \to 2^{\lang} $,
 mapping a formula to a set of formulae. We say that $h$ is \emph{Smyth-monotone}, or simply \emph{S-monotone}, 
 iff for every $A,B\in\lang$, if $ A\leq B$ then $h(A)\Sleq h(B)$. Furthermore, $A\in\lang$ is a \emph{fixed-point} of $h$ iff $A \in h(A)$ (see, \eg~\cite{Straccia09}). 

Eventually, we say that  $h$ is \emph{$\star$-closed}, 
where $\star \in \{ \leq, \equiv \}$, iff for all $A,B,C \in \lang$, if $A \in h(C)$ and $B\star A$ then $B \in h(C)$.\footnote{Note that for $\leq$ order matters as $\leq$ is not symmetric.}
On the other hand, we will say that  $h$ is \emph{$\star$-closed},  where $\star \in \{ \land, \lor \}$, 
iff for all $A,B,C \in \lang$, if $A \in h(C)$ and $B \in h(C)$ then $A\star B \in h(C)$.



\section{Semantics}\label{sect_sem}

We build our semantics on top of two choice functions, $f$ and $g$, representing what an agent considers as relevant connections. 
Specifically, a \emph{conditional interpretation} $\I$ is a pair 

\vspace{-0.2cm}
\[
\I=(f,g)
\]

\vspace{-0.2cm}
\nd \st~$f: \lang \to 2^{\lang}$ and $g: \lang \to 2^{\lang}$. $f$ represents the \emph{relevant effects} of a proposition, and $g$ the \emph{possible conditions} for a proposition to hold. 



\begin{definition}[Satisfaction]\label{def_satisfaction}
Let $\I=(f,g)$ be a conditional interpretation. $\I$ \emph{satisfies} a conditional $A\cond B$, denoted $\I\sat A\cond B$, iff the following conditions hold:
\begin{enumerate}
    \item there is $B' \in \lang$ s.t. $B'\in f(A)$ and $B'\leq B$; and
    \item $A\in g(B)$.
\end{enumerate}

\nd $A\cond B$ is \emph{satisfiable} (has a \emph{model}) if there is a conditional interpretation $\I$ such that $\I\sat A\cond B$. A set of conditionals is \emph{satisfiable} (has a \emph{model}) iff each conditional in it is so.
\end{definition}

\nd Fig. \ref{fig_1} gives a graphical representation of the satisfaction relation: $\I\sat A\cond B$ iff there is a ``triangle'' $A \xrightarrow{f}B'\leq B\xrightarrow{g} A$. We indicate with $A\triangle B$ that  there is a triangle $A \xrightarrow{f}B'\leq B\xrightarrow{g} A$ passing through some $B'\leq B$.

The meaning of the above definition has an epistemic flavour: an agent accepts a conditional connection between $A$ and $B$ if $B$ is a logical consequence of some \emph{relevant effect} $B'$ of $A$ ($B'\in f(A)$), and $A$ is recognised as a \emph{relevant condition} for $B$ ($A\in g(B)$).



%
\begin{figure}[tb]
  \centering
    \includegraphics[width=0.4\textwidth]{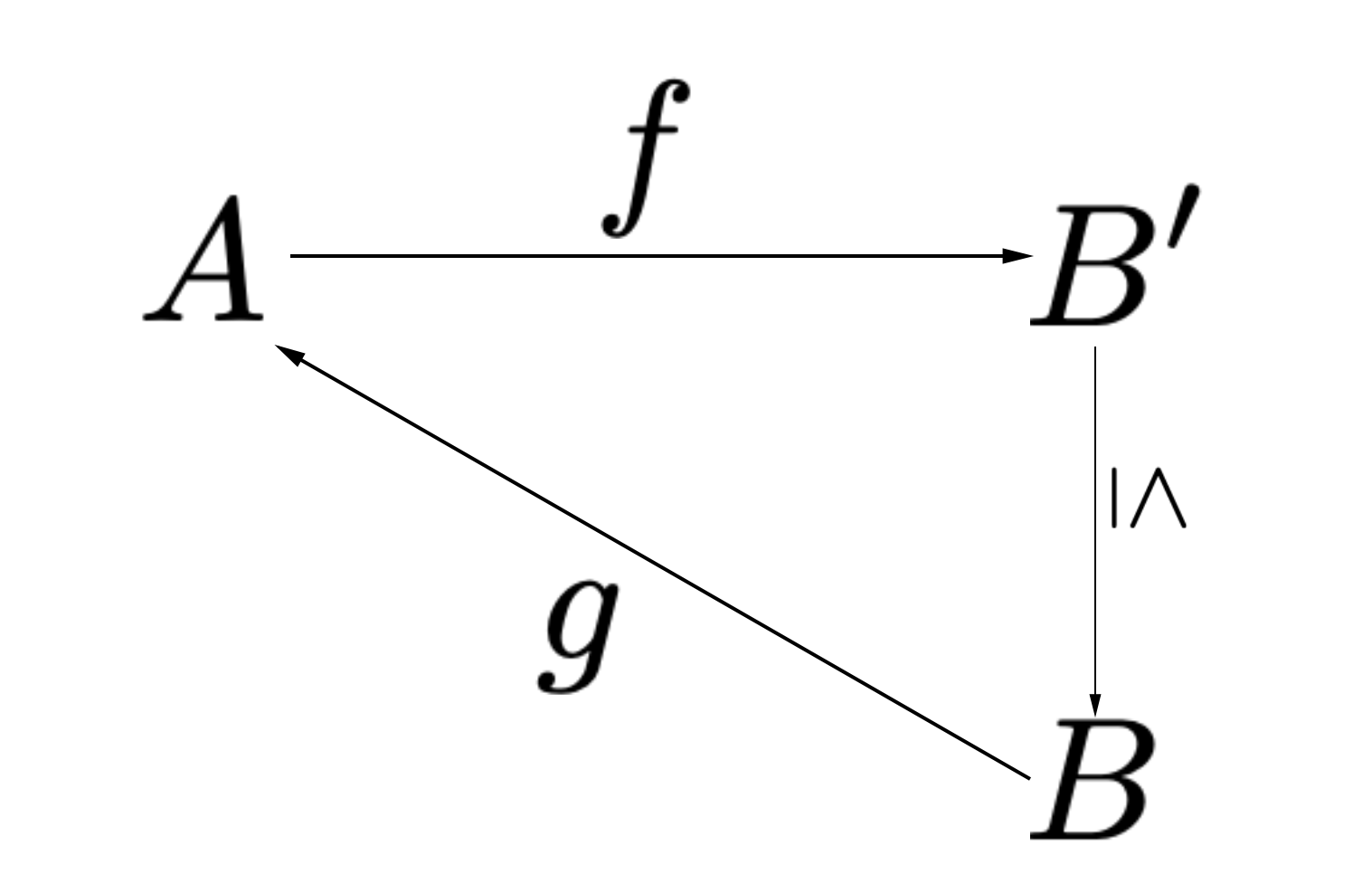}
    \caption{Graphical representation of $\I \sat A\cond B$.}\label{fig_1}
\end{figure}

Given an interpretation $\I$, with $S_{\I}$ we indicate the set of conditionals satisfied by $\I$, \ie~$S_{\I}\defined\{A\cond B\mid \I\sat A \cond B\}$.

Let us note that our class of interpretations is quite generic and, in particular, can represent any set of conditionals. In fact, given a set of conditionals $S$, we may define a model $\I$ characterising it, that is, satisfying exactly the conditionals in $S$ $(\ie, S_{\I}=S$). To do so, given $S$, we construct a conditional interpretation \wrt~$S$
\[
\I_S=(f_S,g_S)    
\]
 
\nd in the following way: 
\begin{enumerate}
    \item define the following sets:    
    $\A_{B}   \defined  \{A\mid A\cond B\in S\}$ and 
    $\C_{A}  \defined  \{B\mid A\cond B\in S\}$. 
    

    \item for every $D\in\lang$, we set
    \[
       f_S(D) = \min_{\leq}(\C_{D}) \text{\ \ and \ \ }  g_S(D) =\A_{D} \ .
    \]
\end{enumerate}

\nd $\I_S$ characterises $S$, as the following proposition proves.
\begin{proposition}\label{prop_characteristicmodel}
Given a set of conditionals $S$, $\I_S$ is its \emph{characteristic model}, that is, 
a conditional $A\cond B$ is in $S$ iff $\I_S\sat A\cond B$.
\end{proposition}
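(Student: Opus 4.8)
The plan is to prove both directions of the biconditional directly from Definition~\ref{def_satisfaction}, unpacking the construction of $\I_S = (f_S, g_S)$.

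\medskip

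\noindent\textbf{($\Rightarrow$) Suppose $A\cond B \in S$.} Then by definition $A \in \A_B$, so $A \in g_S(B)$, which gives condition~(2) of satisfaction. For condition~(1), note $B \in \C_A$, so $\C_A$ is nonempty and $B \in \C_A^{\downarrow}$; I would then argue that $\min_{\leq}(\C_A)$ is ``cofinal from below'' in $\C_A$, i.e.\ for every $B \in \C_A$ there is $B' \in \min_{\leq}(\C_A)$ with $B' \leq B$. Since $\lang$ is finitely generated, up to logical equivalence $\C_A$ has only finitely many elements, so every element of $\C_A$ lies above some $\leq$-minimal one; this yields $B' \in f_S(A)$ with $B' \leq B$, giving condition~(1). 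Hence $\I_S \sat A \cond B$.

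\medskip

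\noindent\textbf{($\Leftarrow$) Suppose $\I_S \sat A \cond B$.} From condition~(2), $A \in g_S(B) = \A_B$, which by definition of $\A_B$ means exactly that $A \cond B \in S$. So this direction is essentially immediate — the $g_S$ component already pins down membership in $S$.

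\medskip

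\noindent The main (and only real) obstacle is the finiteness/cofinality argument in the forward direction: one needs that $\min_{\leq}(\C_A)$ is nonempty whenever $\C_A$ is, and moreover that it is reachable from below by every member of $\C_A$. This is where the ``finitely generated'' hypothesis on $\lang$ does the work — there are only finitely many formulas up to $\equiv$, so no infinite strictly descending chains exist and minimal elements always exist below any given element. It is worth remarking that condition~(1) alone is somewhat redundant here given condition~(2): once $A \cond B \in S$ is forced by $g_S$, condition~(1) is automatically consistent with it, but including the verification keeps the proof self-contained and shows $f_S$ was defined coherently. I would also note in passing that $f_S(A) = \min_{\leq}(\C_A) \Sleq \C_A$, so the construction respects the Smyth order on the relevant effects, though that observation is not needed for the proposition itself.
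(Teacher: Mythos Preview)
Your proof is correct and follows essentially the same approach as the paper's: both directions hinge on the fact that $A\in g_S(B)$ is equivalent to $A\cond B\in S$, with the forward direction additionally checking that some $B'\leq B$ lies in $f_S(A)=\min_{\leq}(\C_A)$. You are in fact slightly more careful than the paper, which asserts the existence of such a $B'$ without comment, whereas you explicitly justify it via the finite-generation hypothesis on $\lang$ to rule out infinite descending $\leq$-chains.
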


\begin{proof}
    From left to right. Assume $A\cond B$ is in $S$. Then, by definition of $\I_S$, we have that $A\in g_S(B)$ and there is a $B'\in f_S(A)$ s.t. $B'\leq B$ (it could be $B$ itself). Hence $\I_S\sat A\cond B$.
    
    From right to left. Assume $\I_S\sat A\cond B$. Then $A\in g_S(B)$, and that, by construction of $\I_S$, can be only if $A\cond B\in S$.  
\end{proof}


\nd Please note that, as we have proved Proposition \ref{prop_characteristicmodel} for any arbitrary set of conditionals $S$, the following immediate corollary tells us that the class of conditional interpretations $\I=(f,g)$ do not impose any form of closure under any structural property.

\begin{corollary}\label{coroll_noclosure}
The class of conditional interpretations can represent any set of conditionals.
\end{corollary}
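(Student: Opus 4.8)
The plan is to observe that this corollary is an immediate consequence of Proposition~\ref{prop_characteristicmodel}, so the ``proof'' amounts to unpacking what ``represent'' means and invoking the already-established characteristic-model construction. First I would fix an arbitrary set of conditionals $S \subseteq \lang_{\cond}$ and recall the explicit construction of $\I_S = (f_S, g_S)$ given just before Proposition~\ref{prop_characteristicmodel}: set $\A_B = \{A \mid A\cond B \in S\}$ and $\C_A = \{B \mid A \cond B \in S\}$, then put $f_S(D) = \min_{\leq}(\C_D)$ and $g_S(D) = \A_D$ for every $D \in \lang$. A brief sanity check is warranted here — namely that $\I_S$ is indeed a conditional interpretation in the sense of Section~\ref{sect_sem}, i.e.\ that $f_S$ and $g_S$ are genuine set-valued functions $\lang \to 2^{\lang}$ — but this is immediate from the definitions, since $\min_{\leq}(\C_D)$ and $\A_D$ are subsets of $\lang$ for each $D$, with no closure or monotonicity requirement imposed on an interpretation.

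Next I would appeal directly to Proposition~\ref{prop_characteristicmodel}, which tells us that $A \cond B \in S$ iff $\I_S \sat A \cond B$, for every $A, B \in \lang$. By the definition of $S_{\I}$ this says precisely that $S_{\I_S} = S$. Hence the conditional interpretation $\I_S$ represents $S$ exactly, and since $S$ was arbitrary, every set of conditionals is representable by some conditional interpretation, which is the claim. I would close by remarking that the point of isolating this as a corollary is conceptual rather than technical: it records that no structural property (And, RW, Cut, etc.) is forced by the class of interpretations itself, because any $S$ whatsoever — including ones violating any desired closure condition — admits a model.

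There is really no main obstacle to flag here; the work has all been done in Proposition~\ref{prop_characteristicmodel}. If I had to name the only point requiring the slightest care, it would be making sure the quantifier scope in Proposition~\ref{prop_characteristicmodel} is read correctly (the equivalence holds for \emph{all} pairs $A,B$, which is exactly what is needed to conclude the set equality $S_{\I_S} = S$ rather than merely one inclusion), together with noting that the argument is insensitive to whether $S$ is finite or infinite, since the construction of $\I_S$ is defined pointwise over all of $\lang$.
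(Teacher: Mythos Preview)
Your proposal is correct and follows exactly the paper's approach: the paper presents this corollary as an immediate consequence of Proposition~\ref{prop_characteristicmodel} (with no separate proof given), observing that since the characteristic-model construction works for an arbitrary $S$, no closure property is forced. Your write-up is simply a slightly more detailed unpacking of that same one-line argument.
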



\begin{corollary}\label{coroll_sat}
Any set of conditionals $S$ is satisfiable.
\end{corollary}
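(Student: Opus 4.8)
The plan is to derive Corollary~\ref{coroll_sat} directly from Proposition~\ref{prop_characteristicmodel} (equivalently, from Corollary~\ref{coroll_noclosure}). Given an arbitrary set of conditionals $S$, Proposition~\ref{prop_characteristicmodel} tells us that the conditional interpretation $\I_S = (f_S, g_S)$ satisfies exactly the conditionals in $S$, i.e. $S_{\I_S} = S$. In particular, for every $A\cond B \in S$ we have $\I_S \sat A\cond B$. By Definition~\ref{def_satisfaction}, a set of conditionals is satisfiable precisely when it has a model that satisfies each of its members; since $\I_S$ is such a model, $S$ is satisfiable.

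Concretely, the proof is a single line: let $S$ be any set of conditionals; by Proposition~\ref{prop_characteristicmodel}, $\I_S \sat A\cond B$ for every $A\cond B \in S$, hence $\I_S$ is a model of $S$ and $S$ is satisfiable. One could optionally note the contrast with classical logic: here there is no notion of inconsistency between conditionals, because the interpretations impose no closure conditions, so even a set like $\{A\cond B, A\cond \neg B\}$ has a model.

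There is essentially no obstacle here — the work was all done in Proposition~\ref{prop_characteristicmodel}, whose nontrivial direction is the right-to-left inclusion showing $\I_S$ does not satisfy any conditional outside $S$. That extra strength is not even needed for satisfiability: only the left-to-right direction ($A\cond B \in S \Rightarrow \I_S \sat A\cond B$) is used. So the one thing to be careful about is simply citing the already-established result rather than re-proving anything; the corollary is a genuine triviality once Proposition~\ref{prop_characteristicmodel} is in hand.

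\begin{proof}
Let $S$ be any set of conditionals. By Proposition~\ref{prop_characteristicmodel}, for every $A\cond B \in S$ we have $\I_S \sat A\cond B$. Hence $\I_S$ is a model of $S$, and therefore $S$ is satisfiable.
\end{proof}
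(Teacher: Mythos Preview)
Your proof is correct and matches the paper's approach: the corollary is stated without proof in the paper precisely because it follows immediately from Proposition~\ref{prop_characteristicmodel}, exactly as you derive it. Your observation that only the left-to-right direction of Proposition~\ref{prop_characteristicmodel} is needed is accurate and a nice clarification.
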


\section{Structural Properties}\label{sect_structural}

In the following, we are going to show that by constraining the functions $f$ and $g$, it is possible to enforce the closure of the set of conditionals under structural properties that are considered as appropriate for modelling various kinds of reasoning. We start by analysing some classical reasoning patterns. 

\mycomment{Umberto: provide some intuition on semantic condition for each structural property, if possible}


At first, as $f$ and $g$ range over formulae and not over possible worlds, \ie~logically closed theories, \emph{Definition \ref{def_satisfaction} does not imply any form of closure under logical equivalence}. Such a behaviour may be desirable in some epistemic contexts in which we would like to avoid some form of a priori \emph{logical omniscience}~\cite{FaginEtAl1995}. That is, the well known reasoning patterns of  
\emph{Left Logical Equivalence} (LLE) and \emph{Right Logical Equivalence} (RLE) do not hold in general in our framework. However, if these are desired, it is quite straightforward to enforce (LLE) and (RLE) in our setting. Specifically, for


    

\begin{description}
\item[Left Logical Equivalence:]
\begin{eqnarray*}
\frac{ A\cond C,  \hspace{0.3cm} A \equiv B}{ B\cond C } & \text{(LLE)}
\end{eqnarray*}
\end{description}

\nd it suffices to impose the following semantic constraints on a conditional interpretation $\I=(f,g)$: 

%

\begin{description}
\item[(LLE$_\I$)] for all $A,B$:
\begin{enumerate}
    \item if $A\equiv B$, then $f(A)=f(B)$;
    \item $g$ is $\equiv$-closed.
\end{enumerate}
\end{description}
\nd Similarly, for 

\begin{description}
\item[Right Logical Equivalence:]
\begin{eqnarray*}
\frac{ A\cond B,  \hspace{0.3cm} B \equiv C}{ A\cond C } & \text{(RLE)}
\end{eqnarray*}
\end{description}

\nd the semantic constraint to be imposed on a conditional interpretation $\I=(f,g)$ is: 

\begin{description}
\item[(RLE$_\I$)] for all $A,B$:
\begin{enumerate}
    \item if $A\equiv B$, then $g(A)=g(B)$.
\end{enumerate}
\end{description}

\nd The conditions (LLE$_\I$) and (RLE$_\I$) characterise the classes of the conditional interpretations satisfying, respectively, (LLE) and (RLE). In fact, it can be shown that\footnote{Since the proof is straightforward we  omit it.}

\begin{proposition}\label{prop_le}
A set of conditionals $S$ is closed under (LLE) (resp.~(RLE)) iff it can be characterised by a conditional model $\I=(f,g)$ that satisfies (LLE$_{\I}$) (resp.~(RLE$_{\I}$)).
\end{proposition}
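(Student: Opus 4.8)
The plan is to prove each of the two biconditionals by establishing the two directions separately, and in each case the "characterisable-by-a-constrained-model" direction will reduce to checking that the generic characteristic model $\I_S$ of Proposition~\ref{prop_characteristicmodel}, possibly after a small modification, already satisfies the semantic constraint. For the (LLE) case, the right-to-left direction is the easy one: suppose $S=S_{\I}$ for some $\I=(f,g)$ satisfying (LLE$_\I$), and suppose $A\cond C\in S$ and $A\equiv B$. From $\I\sat A\cond C$ we get $C'\in f(A)$ with $C'\leq C$, and $A\in g(C)$. By clause~1 of (LLE$_\I$), $f(A)=f(B)$, so $C'\in f(B)$ with $C'\leq C$; and since $A\in g(C)$ and $g$ is $\equiv$-closed, $B\in g(C)$. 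Hence $\I\sat B\cond C$, i.e.\ $B\cond C\in S$, so $S$ is closed under (LLE).

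For the left-to-right direction of the (LLE) case, assume $S$ is closed under (LLE). The natural move is to take the characteristic model $\I_S=(f_S,g_S)$ from Proposition~\ref{prop_characteristicmodel} and verify (LLE$_\I$). For clause~2: if $A\in g_S(C)$ then $A\cond C\in S$ by definition of $g_S$, and if $B\star A$ with $\star\in\{\equiv\}$, i.e.\ $B\equiv A$, then closure of $S$ under (LLE) gives $B\cond C\in S$, so $B\in g_S(C)$; thus $g_S$ is $\equiv$-closed. For clause~1, the potential obstacle is that $f_S(A)=\min_{\leq}(\C_A)$ need not literally equal $f_S(B)=\min_{\leq}(\C_B)$ when $A\equiv B$ — closure under (LLE) gives $\C_A=\C_B$ (if $A\cond C\in S$ then $B\cond C\in S$ and vice versa, using $A\equiv B$), so in fact $\min_{\leq}(\C_A)=\min_{\leq}(\C_B)$ and clause~1 holds directly. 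So this case goes through with $\I_S$ unchanged. The (RLE) case is entirely analogous and in fact simpler: right-to-left, $\I\sat A\cond B$ together with (RLE$_\I$) — which forces $g(B)=g(C)$ when $B\equiv C$ — plus $C'\in f(A)$, $C'\leq B\equiv C$ gives $C'\leq C$ and $A\in g(B)=g(C)$, hence $\I\sat A\cond C$; left-to-right, closure of $S$ under (RLE) yields $\A_B=\A_C$ whenever $B\equiv C$, so $g_S(B)=\A_B=\A_C=g_S(C)$ and (RLE$_\I$) holds for $\I_S$.

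The only subtlety worth flagging — and the reason the authors call the proof "straightforward" and omit it — is the interplay between clause~1 of (LLE$_\I$), which is stated as an equality $f(A)=f(B)$, and the behaviour of the $\min_{\leq}$ operator used to define $f_S$; one must note that $\min_{\leq}$ depends only on the set $\C_A$ of effects, and that (LLE)-closure makes $\C_A$ invariant under $\equiv$-variation of the antecedent, so no discrepancy arises. With that observation the four verifications (two directions, two properties) are all immediate applications of the definitions of $\I_S$, of satisfaction (Definition~\ref{def_satisfaction}), and of Proposition~\ref{prop_characteristicmodel}.
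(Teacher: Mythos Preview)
Your proposal is correct and follows exactly the template the paper uses for all the other structural properties (Propositions~\ref{prop_ref}--\ref{prop_antirw}): verify the right-to-left direction by unwinding Definition~\ref{def_satisfaction} under the semantic constraint, and for the left-to-right direction check that the characteristic model $\I_S$ of Proposition~\ref{prop_characteristicmodel} already satisfies the constraint. The paper omits the proof as ``straightforward'', and your observation that (LLE)-closure forces $\C_A=\C_B$ whenever $A\equiv B$ (so that $f_S(A)=\min_{\leq}(\C_A)=\min_{\leq}(\C_B)=f_S(B)$, with no modification of $\I_S$ needed) is precisely the one small point that makes it so.
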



\nd Another basic property is \emph{Reflexivity}, that simply states that for every proposition $A$ it holds `If $A$, then $A$'. Despite appearing as an obviously valid conditional, there are some contexts in which it is not a desirable property. Consider for example a deontic system expressing recommendations, in which $A\cond B$ is read as ``if $A$ holds, then $B$ would be preferrable''. This kind of conditionals can result quite counter-intuitive if it embeds reflexivity (see \eg~\cite{Makinson2000}): while  ``if there is an act of violence, then you should call the police'' appears to be a reasonable conditional, to be forced to conclude ``if there is an act of violence, then there should be an act of violence'' is counter-intuitive. Reflexivity does not hold in our framework, though if we would like to have this pattern, it suffices to impose a simple constraint on conditional interpretations. For
\begin{description}
\item[Reflexivity:]
\begin{eqnarray*}
A\cond A & \text{(Ref)}
\end{eqnarray*}
\end{description}
\nd the semantic constraint to be imposed on a conditional interpretation $\I=(f,g)$ is: %
\begin{description}
\item[(Ref$_\I$)] for all $A$:
\begin{enumerate}
    \item $A$ is a fixed-point of both $f$ and $g$.
\end{enumerate}
\end{description}






\begin{proposition}\label{prop_ref}
A set of conditionals $S$ is closed under (Ref) iff it can be characterised by a conditional model $\I=(f,g)$ that satisfies (Ref$_{\I}$).
\end{proposition}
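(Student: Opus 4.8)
The plan is to prove both directions of the equivalence, mirroring the structure of the proof of Proposition~\ref{prop_characteristicmodel} and the sketched arguments for (LLE)/(RLE) and (Ref).

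For the ``if'' direction, suppose $S$ is characterised by some model $\I=(f,g)$ satisfying (Ref$_\I$). I want to show $A\cond A\in S$ for every $A\in\lang$. Since $A$ is a fixed-point of $f$, we have $A\in f(A)$, and trivially $A\leq A$, so condition~1 of Definition~\ref{def_satisfaction} holds with $B'\abbrev A$. Since $A$ is a fixed-point of $g$, we have $A\in g(A)$, so condition~2 holds. Hence $\I\sat A\cond A$, and because $\I$ characterises $S$ this gives $A\cond A\in S$. Therefore $S$ is closed under (Ref).

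For the ``only if'' direction, suppose $S$ is closed under (Ref), i.e.\ $A\cond A\in S$ for all $A$. The natural move is to take the characteristic model $\I_S=(f_S,g_S)$ supplied by Proposition~\ref{prop_characteristicmodel}, which already characterises $S$, and check that it satisfies (Ref$_\I$). For $g_S$: since $A\cond A\in S$, by definition $A\in\C_A$ and $A\in\A_A$, so $A\in g_S(A)=\A_A$, i.e.\ $A$ is a fixed-point of $g_S$. For $f_S$: we have $f_S(A)=\min_{\leq}(\C_A)$, and we need $A\in\min_{\leq}(\C_A)$. We know $A\in\C_A$; the point is that $A$ is $\leq$-minimal in $\C_A$. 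Indeed, if some $B\in\C_A$ had $B<A$, that is fine for membership but we must confirm it does not exclude $A$ --- wait, minimality of $A$ requires that \emph{no} $B\in\C_A$ satisfies $B<A$, which need not hold in general.

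This is the one place where a small subtlety arises, and it is the main obstacle: the characteristic model as defined uses $\min_{\leq}(\C_A)$ for $f_S$, so $A$ need not be a fixed-point of $f_S$ even when $A\cond A\in S$. I see two ways to finish. The cleanest is to observe that condition~1 of Definition~\ref{def_satisfaction} only requires \emph{some} $B'\in f_S(A)$ with $B'\leq A$; since $A\in\C_A$ and $\C_A$ is finite (as $\lang$ is finitely generated, up to $\equiv$) there is a $\leq$-minimal $B'\in\C_A$ below $A$, so this is automatic --- but that shows $\I_S$ still satisfies $A\cond A$, not that it satisfies (Ref$_\I$) literally. To get a model literally satisfying (Ref$_\I$), I would instead modify the construction and set $f_S'(D)\abbrev \min_{\leq}(\C_D)\cup\{D\}$ whenever $D\cond D\in S$ (and $f_S'(D)\abbrev f_S(D)$ otherwise), and $g_S'\abbrev g_S$; then argue that $\I_S'=(f_S',g_S')$ still characterises $S$ (adding $D$ to $f_S'(D)$ only affects conditionals $D\cond B$ with $D\leq B$, all of which are in $S$ already by (Ref) together with the presence of $D\cond D$ --- here one needs $S$ to also be closed under enough weakening, which it may not be). Given that possible gap, the safest route is the first: keep $\I_S$ and note that (Ref$_\I$) should be read as ``there is a characterising model satisfying it'', and exhibit $f_S'(D)=\C_D$ (all of $\C_D$, not just the minima) together with $g_S'=g_S$; since every element of $f_S'(A)=\C_A$ is $\leq$-above some minimal element, $\I_S'$ satisfies exactly the same conditionals as $\I_S$, hence characterises $S$, and now $A\in\C_A=f_S'(A)$ and $A\in\A_A=g_S'(A)$, so (Ref$_\I$) holds. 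I would present this last version, checking carefully that enlarging $f_S$ to all of $\C_D$ does not add new satisfied conditionals (condition~2 via $g_S$ is unchanged, so it cannot), which closes the argument.
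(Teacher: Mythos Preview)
Your ``if'' direction is correct and matches the paper. For the ``only if'' direction, your final construction (taking $f'_S(D)=\C_D$ and leaving $g_S$ unchanged) is correct, and your justification---that since $g_S$ is untouched, condition~2 of Definition~\ref{def_satisfaction} alone already pins down exactly the conditionals in $S$, so enlarging $f$ cannot add new satisfied conditionals---is the right one. However, the paper takes precisely your \emph{first} idea: it sets $f(A)=f_S(A)\cup\{A\}$ and $g=g_S$. Your worry about that construction (``one needs $S$ to also be closed under enough weakening'') is unfounded, and is dissolved by the very observation you make at the end: adding $A$ to $f(A)$ could only affect conditionals $A\cond B$ with $A\leq B$, but for any such conditional to be satisfied you still need $A\in g(B)=g_S(B)=\A_B$, i.e.\ $A\cond B\in S$ already. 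So both constructions work, for the same reason; the paper's is the minimal one, and your detour through the second construction is unnecessary once you see that your own final argument applies equally to the first.
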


\begin{proof}
From right to left. Assume that $S$ is characterised by some conditional model $\I=(f,g)$ that satisfies (Ref$_{\I}$), that is, $S=S_\I$. We have to show that $S_\I$ is closed under (Ref), and it is immediate to see that (Ref$_{\I}$) implies $\I\sat A\cond A$ for every $A$.

From left to right. Let $S$ be a set of conditionals closed under (Ref). We need to prove that there is a conditional interpretation $\I=(f,g)$ characterising it and satisfying (Ref$_{\I}$). We can define such an $\I$ by slightly modifying the characteristic model $\I_S=(f_S,g_S)$. Specifically, it suffices to consider $\I=(f,g)$, where  $g(A)=g_S(A)$ and $f(A)=f_S(A)\cup\{A\}$, for every $A$. Clearly, $\I$ satisfies (Ref$_{\I}$).  The proof that $A\cond B\in S$ iff $\I\sat A\cond B$ is analogous to the proof of Proposition~\ref{prop_characteristicmodel}, considering also that $S$ is closed under (Ref).
\end{proof}

\nd As next, we consider more elaborate structural properties. We start with considering the \emph{Cut} reasoning pattern, one of the main structural properties in classical logic (\cf~Section~\ref{sect_intro}). So, for

\begin{description}
\item[Cut:]
\begin{eqnarray*}
\frac{ A\land B\cond C,  \hspace{0.3cm} A\cond B }{ A\cond C } & & \text{(Cut)}
\end{eqnarray*}
\end{description}

\nd the semantic constraints to be imposed on a conditional interpretation $\I=(f,g)$ are: %
\begin{description}
\item[(Cut$_\I$)] for all $A,B,C$:
\begin{enumerate}
    \item If $A\triangle B$, then $f(A)\Sleq f(A\land B)$;
    \item If $A\in g(B)$ and $A\land B\in g(C)$, then $A\in g(C)$.
\end{enumerate}
\end{description}

\nd Then, we can show that

\begin{proposition}\label{prop_cut}
A set of conditionals $S$ is closed under (Cut) iff it can be characterised by a conditional model $\I=(f,g)$ that satisfies (Cut$_{\I}$).
\end{proposition}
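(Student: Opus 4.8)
The plan is to follow the two-direction template of Propositions~\ref{prop_characteristicmodel}--\ref{prop_ref}, using the characteristic model $\I_S$ for the direction that requires building a model.

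For the direction from right to left, I would assume $S=S_\I$ for some conditional model $\I=(f,g)$ satisfying (Cut$_\I$) and verify that $S_\I$ is closed under (Cut). Take $A\land B\cond C\in S_\I$ and $A\cond B\in S_\I$. From $\I\sat A\cond B$, i.e.\ $A\triangle B$, we get in particular $A\in g(B)$, while (Cut$_\I$.1) delivers $f(A)\Sleq f(A\land B)$. From $\I\sat A\land B\cond C$ we obtain some $C'\in f(A\land B)$ with $C'\leq C$ together with $A\land B\in g(C)$. Feeding $C'$ into the Smyth inequality $f(A)\Sleq f(A\land B)$ produces some $C''\in f(A)$ with $C''\leq C'$, hence $C''\leq C$ by transitivity of $\leq$; this is clause~1 of $\I\sat A\cond C$. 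Clause~2, $A\in g(C)$, comes from $A\in g(B)$, $A\land B\in g(C)$ and (Cut$_\I$.2). So $A\cond C\in S_\I$, as required.

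For the direction from left to right, I would assume $S$ is closed under (Cut) and show that the characteristic model $\I_S=(f_S,g_S)$ itself already satisfies (Cut$_\I$) --- in contrast to the reflexivity case, no modification of $f_S$ should be needed --- so that Proposition~\ref{prop_characteristicmodel} finishes the job. Clause~2 is immediate, since $A\in g_S(B)$ and $A\land B\in g_S(C)$ unfold to $A\cond B\in S$ and $A\land B\cond C\in S$, whence $A\cond C\in S$ by (Cut), i.e.\ $A\in g_S(C)$. For clause~1, assuming $A\triangle B$ in $\I_S$ (equivalently $\I_S\sat A\cond B$, so $A\cond B\in S$ by Proposition~\ref{prop_characteristicmodel}), I would take an arbitrary $E\in f_S(A\land B)=\min_{\leq}(\C_{A\land B})$, note $A\land B\cond E\in S$, apply (Cut) with $A\cond B\in S$ to get $A\cond E\in S$, hence $E\in\C_A$.

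The step I expect to be the crux is the last one: from $E\in\C_A$ I need some $E'\in f_S(A)=\min_{\leq}(\C_A)$ with $E'\leq E$, i.e.\ $\min_{\leq}(\C_A)\Sleq\C_A$. This is where finiteness enters: because $\lang$ is finitely generated there are only finitely many formulae up to logical equivalence, so $(\lang,\leq)$ has no infinite strictly descending chain and every element of $\C_A$ lies above a $\leq$-minimal element of $\C_A$. Granting this, $f_S(A)\Sleq f_S(A\land B)$ follows, clause~1 holds, and the proof is complete.
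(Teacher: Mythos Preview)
Your proof is correct and follows essentially the same route as the paper's. The one cosmetic difference is in the final step of clause~1 for the left-to-right direction: where you argue directly that $E\in\C_A$ and then invoke finiteness of $\lang$ to obtain some $E'\in\min_{\leq}(\C_A)=f_S(A)$ with $E'\leq E$, the paper instead re-applies Proposition~\ref{prop_characteristicmodel} to go from $A\cond E\in S$ back to $\I_S\sat A\cond E$, which by the definition of satisfaction immediately yields such an $E'$. Both arguments are equivalent --- the finiteness consideration you make explicit is precisely what underlies the left-to-right direction of Proposition~\ref{prop_characteristicmodel} --- so this is a matter of packaging rather than substance.
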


\begin{proof}
From right to left. Assume that $S$ is characterised by some conditional model $\I=(f,g)$ that satisfies
(Cut$_{\I}$), that is $S=S_\I$. We need to prove that $S_\I$ is closed under (Cut). Suppose $\I\sat A\cond B$ and $\I\sat A\land B\cond C$. Then there is some $C'\in f(A\land B)$ s.t. $C'\leq C$. Since $f(A)\Sleq f(A\land B)$ there is some $C''\in f(A)$ s.t. $C''\leq C'$, that is, $C''\leq C$. Regarding $g$, we have $A\in g(B)$ and $A\land B\in g(C)$, hence $A\in g(C)$. $C''\in f(A)$ and $A\in g(C)$ imply $\I\sat A\cond C$. Therefore,  $S_\I$ is closed under (Cut).

From left to right. Let $S$ be a set of conditionals closed under (Cut). We need to prove that there is a conditional interpretation $\I=(f,g)$ characterising it and satisfying (Cut$_{\I}$). Let us consider the characteristic model $\I_S$.
We need to prove that it satisfies the two conditions of (Cut$_{\I}$). So, assume $A\triangle B$ and $C\in f_S(A\land B)$. $A\triangle B$ implies $\I_S\sat A\cond B$, that by Proposition \ref{prop_characteristicmodel} implies $A\cond B\in S$. By construction of $\I_S$, $C\in f_S(A\land B)$ implies that $A\land B\cond C\in S$. From $\{A\cond B, A\land B\cond C\}\subseteq S$ and (Cut) we have that $A\cond C\in S$, and, by Proposition \ref{prop_characteristicmodel}, $\I_S\sat A\cond C$. Therefore, there is a $C'$ s.t. $C'\in f_S(A)$ and $C'\leq C$. That is, $f_S(A)\leq_S f_S(A\land B)$ holds.  Regarding the second condition on $g_S$, let $A\in g_S(B)$ and $A\land B\in g_S(C)$. By construction of $\I_S$, $A\cond B$ and $A\land B\cond C$ are in $S$, and by (Cut) $A\cond C\in S$. Therefore, by construction of $\I_S$, $A\in g_S(C)$, which concludes the proof.
\end{proof}

\nd As next, we address \emph{monotonicity} (\cf~Section~\ref{sect_intro}), also a main property of classical logic.  It states that strengthening the antecedent of a conditional from a logical point of view, we still preserve the effects. 
For example, the conditional $\mathtt{horse}\cond \mathtt{mammal}$ in a monotonic system imposes to conclude that any kind of horse is a mammal, \eg~$\mathtt{horse}\land \mathtt{mustang}\cond \mathtt{mammal}$. That is, (Mon) makes our conditionals  \emph{strict}, in the sense that they do not admit exceptions. So, for

\begin{description}
\item[Monotonicity:]
\begin{eqnarray*}
\frac{ A\cond C,  \hspace{0.3cm} \clent B\limp A 
} { B\cond C } & & \text{(Mon)} 
\end{eqnarray*}
\end{description}

\nd the semantic constraints to be imposed on a conditional interpretation $\I=(f,g)$ are: %
\begin{description}
\item[(Mon$_{\I}$)] \
\begin{enumerate}
    \item $f$ is S-Monotone;
    \item $g$ is $\leq$-closed.
\end{enumerate}
\end{description}

\begin{proposition}\label{prop_mon}
A set of conditionals $S$ is closed under (Mon) iff it can be characterised by a conditional model $\I=(f,g)$ that satisfies (Mon$_{\I}$).
\end{proposition}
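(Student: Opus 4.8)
The plan is to follow the same two-directional template used for Propositions~\ref{prop_cut} and~\ref{prop_ref}, observing that here — unlike the (Ref) case — the characteristic model $\I_S$ can be used directly, without modification.

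\emph{From right to left.} Suppose $S=S_\I$ for a conditional model $\I=(f,g)$ satisfying (Mon$_\I$); I would show that $S_\I$ is closed under (Mon). Assume $\I\sat A\cond C$ and $\clent B\limp A$, i.e.\ $B\leq A$. Unfolding the satisfaction relation, there is $C'\in f(A)$ with $C'\leq C$, and $A\in g(C)$. Since $f$ is S-monotone and $B\leq A$, we have $f(B)\Sleq f(A)$, so there is $C''\in f(B)$ with $C''\leq C'$, hence $C''\leq C$. Since $g$ is $\leq$-closed, from $A\in g(C)$ and $B\leq A$ we obtain $B\in g(C)$. The two conditions of Definition~\ref{def_satisfaction} are thus met, so $\I\sat B\cond C$, i.e.\ $B\cond C\in S_\I$.

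\emph{From left to right.} Suppose $S$ is closed under (Mon). I claim that the characteristic model $\I_S=(f_S,g_S)$ of Proposition~\ref{prop_characteristicmodel} already satisfies (Mon$_\I$); since $\I_S$ characterises $S$, this finishes the argument. For condition~1 ($f_S$ S-monotone), let $A\leq B$ and pick $D\in f_S(B)=\min_{\leq}(\C_B)$; then $B\cond D\in S$, and from $\clent A\limp B$ together with (Mon) we get $A\cond D\in S$, so $D\in\C_A$, and hence some $D'\in\min_{\leq}(\C_A)=f_S(A)$ satisfies $D'\leq D$; this gives $f_S(A)\Sleq f_S(B)$. For condition~2 ($g_S$ is $\leq$-closed), let $A\in g_S(C)=\A_C$, so $A\cond C\in S$, and let $B\leq A$, i.e.\ $\clent B\limp A$; by (Mon), $B\cond C\in S$, hence $B\in\A_C=g_S(C)$.

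I expect no genuine difficulty here, only bookkeeping: one must keep the orientation of $\leq$ versus $\limp$ and the direction of the closure conditions consistent throughout, and — as implicitly in the earlier proofs — use that $\min_{\leq}(\C_A)$ always contains an element below any given member of $\C_A$, which holds because $\lang$ is finitely generated and so $\leq$ induces a partial order on finitely many equivalence classes. This alignment of directions is the only step that warrants care.
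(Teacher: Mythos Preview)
Your proof is correct and follows essentially the same approach as the paper: the right-to-left direction unfolds satisfaction and applies S-monotonicity of $f$ and $\leq$-closure of $g$, while the left-to-right direction verifies (Mon$_\I$) directly for the characteristic model $\I_S$ without modification. Your closing remark about the existence of minimal elements in $\C_A$ (via finiteness of the language) makes explicit a point the paper leaves implicit, but otherwise the arguments coincide.
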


\begin{proof}
From right to left.  Assume that $S$ is characterised by some conditional model $\I=(f,g)$ that satisfies  (Mon$_{\I}$), that is, $S=S_\I$. We need to prove that $S_\I$ is closed under (Mon). Assume $\I\sat A\cond C$ and $\clent B\limp A$, \ie~$B \leq A$. $\I\sat A\cond C$ implies that there is some $B'\in f(A)$ s.t. $B'\leq C$. By S-Monotonicity $f(B) \Sleq f(A) $ and, thus, there is some $B''\in f(B)$ s.t. $B''\leq B'$, that implies $B''\leq C$. As $g$ is $\leq$-closed and $B \leq A$, $A\in g(C)$ implies $B \in g(C)$. Hence $\I\sat B \cond C$. Therefore,  $S_\I$ is closed under (Mon).

From left to right. Let $S$ be a set of conditionals closed under (Mon). We need to prove that there is a conditional interpretation $\I=(f,g)$ characterising it and satisfying (Mon$_{\I}$). Let us consider the characteristic model $\I_S$ of $S$ as by Proposition~\ref{prop_characteristicmodel}. We need to prove that it satisfies the two conditions of (Mon$_{\I}$). So, let $B\leq A$, and let $C\in f_S(A)$. By construction of $\I_S$, $C\in f_S(A)$ implies that $A\cond C\in S$, and by (Mon) $B\cond C\in S$. By construction of $\I_S$, either $C\in f_S(B)$, or there is a $C'\in f_S(B)$ s.t. $C'\leq C$. Hence $f_S$ is S-Monotone. Also the 
$\leq$-closure of $g_S$ is an immediate consequence of the closure under (Mon) of $S$ and the definition of $g_S$ in $\I_S$, , which concludes the proof.
\end{proof}

\nd As by Section~\ref{sect_intro}, (And) is a property that appears desirable in many contexts, but may have some exceptions. For 

\begin{description}
\item[Right Conjunction:]
\begin{eqnarray*}
\frac{ A\cond B,  \hspace{0.3cm} A\cond C }{ A\cond (B\land C) } & & \text{(And)} 
\end{eqnarray*}
\end{description}

\nd the semantic constraints to be imposed on a conditional interpretation $\I=(f,g)$ that characterise (And) are: %
\begin{description}
\item[(And$_{\I}$)] for all $A,B$:
\begin{enumerate}
    \item if $B,C\in \min_{\leq}(f(A))$, then $B\equiv C$;
    \item $g(A)\cap g(B)\subseteq g(B\land C)$.
\end{enumerate}
\end{description}

\begin{proposition}\label{prop_and}
A set of conditionals $S$ is closed under (And) iff it can be characterised by a conditional model $\I=(f,g)$ that satisfies (And$_{\I}$).
\end{proposition}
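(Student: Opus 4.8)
The plan is to follow the same two-part template used for Propositions~\ref{prop_cut} and~\ref{prop_mon}, and to show moreover that for the left-to-right direction the characteristic model $\I_S$ of Proposition~\ref{prop_characteristicmodel} already satisfies (And$_\I$), so no modification is needed. For the right-to-left direction, assume $S=S_\I$ with $\I=(f,g)$ satisfying (And$_\I$), and suppose $\I\sat A\cond B$ and $\I\sat A\cond C$; the goal is $\I\sat A\cond(B\land C)$. The $g$-side is immediate: the two satisfactions give $A\in g(B)$ and $A\in g(C)$, hence $A\in g(B)\cap g(C)\subseteq g(B\land C)$ by the second clause of (And$_\I$). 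For the $f$-side, the two satisfactions give $B'\in f(A)$ with $B'\leq B$ and $C'\in f(A)$ with $C'\leq C$. Since $\lang$ is finitely generated, $\leq$ is a finite partial order up to $\equiv$, so there is a $\leq$-minimal element $B_0$ of $f(A)$ with $B_0\leq B'$ and a $\leq$-minimal element $C_0$ of $f(A)$ with $C_0\leq C'$; by the first clause of (And$_\I$), $B_0\equiv C_0$. Then $B_0\leq B$ and $B_0\equiv C_0\leq C$, so $B_0\leq B\land C$, and since $B_0\in f(A)$ this witnesses clause~1 of Definition~\ref{def_satisfaction} for $A\cond(B\land C)$; together with $A\in g(B\land C)$ we conclude $\I\sat A\cond(B\land C)$, so $S_\I$ is closed under (And).

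For the left-to-right direction, let $S$ be closed under (And) and take the characteristic model $\I_S=(f_S,g_S)$, which characterises $S$ by Proposition~\ref{prop_characteristicmodel}; it remains only to check (And$_\I$). For clause~1, note $\min_{\leq}(f_S(A))=\min_{\leq}(\min_{\leq}(\C_A))=\min_{\leq}(\C_A)$ by idempotence of $\min_{\leq}$; if $B,C$ are both $\leq$-minimal in $\C_A$, then $A\cond B,A\cond C\in S$, so (And) gives $A\cond(B\land C)\in S$, i.e.\ $B\land C\in\C_A$; since $B\land C\leq B$ and $B$ is minimal in $\C_A$, we get $B\equiv B\land C$, and symmetrically $C\equiv B\land C$, whence $B\equiv C$. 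For clause~2, if $W\in g_S(B)\cap g_S(C)$ then $W\cond B,W\cond C\in S$, so (And) yields $W\cond(B\land C)\in S$, i.e.\ $W\in g_S(B\land C)$; hence $g_S(B)\cap g_S(C)\subseteq g_S(B\land C)$. This completes the direction.

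The one delicate step is the $f$-side of the right-to-left direction: one must manufacture a relevant effect of $A$ lying below the meet $B\land C$, and neither witness $B'$ nor $C'$ need do so on its own. The resolution exploits that the propositional lattice is finite up to $\equiv$ (because $\lang$ is finitely generated), so $\leq$-minimal relevant effects below each witness exist, and clause~1 of (And$_\I$) forces these minimal effects into a single equivalence class, which then sits below $B\land C$. Everything else is a routine unfolding of Definition~\ref{def_satisfaction} and of the construction of $\I_S$.
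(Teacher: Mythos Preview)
Your proof is correct and follows essentially the same approach as the paper's own proof: both directions use the characteristic model $\I_S$ for the left-to-right implication and unfold Definition~\ref{def_satisfaction} together with the two clauses of (And$_\I$) for the right-to-left one. Your treatment is in fact slightly more explicit than the paper's---you justify the existence of $\leq$-minimal elements of $f(A)$ below $B'$ and $C'$ via the finiteness of $\lang$ up to $\equiv$, and your verification of clause~1 for $\I_S$ proceeds directly (via $B\land C\in\C_A$ forcing $B\equiv B\land C\equiv C$) rather than by contradiction as the paper does---but these are cosmetic differences, not a different route.
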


\begin{proof}
From right to left. Assume that $S$ is characterised by some conditional model $\I=(f,g)$ that satisfies
 (And$_{\I}$), that is, $S=S_\I$. We need to prove that $S_\I$ is closed under (And). Assume $\I\sat A\cond B$ and $\I\sat A\cond C$. Then there is some $B'\in f(A)$ \st~$B'\leq B$ and some $C'\in f(A)$ \st~$C'\leq C$. $\min_{\leq}(f(A))$ contains some $B^*$ s.t. $B^*\leq B$ and some $C^*$ s.t. $C^*\leq C$. By the first condition of  (And$_{\I}$) we have $B^*\equiv C^*$, and as a consequence we have  $B^*\leq C$ and eventually $B^*\leq B\land C$. 
Regarding $g$, we have $A\in g(B)$ and $A\in g(C)$, hence $A\in g(B\land C)$. $B^*\in f(A)$, $B^*\leq B\land C$ and $A\in g(B\land C)$ together imply $\I\sat A\cond (B\land C)$. Therefore, $S_\I$ is closed under (And).

From left to right. Let $S$ be a set of conditionals closed under (And). We need to prove that there is a conditional interpretation $\I=(f,g)$ characterising it and satisfying (And$_{\I}$). Let us consider the characteristic model $\I_S$ of $S$ as by Proposition~\ref{prop_characteristicmodel}. We need to prove that it satisfies the two conditions of (And$_{\I}$). So, assume there are three propositions  $A,B,C$ \st~$B,C\in \min_{\leq}(f_S(A))$ and $B\not\equiv C$. From the construction of $\I_S$ we have that $B,C\in \min_{\leq}(f_S(A))$ implies that $B\land C\notin f_S(A)$, and that for any $A,B$, if $B\in f_S(A)$, then $\I_S\sat A\cond B$. Hence we have $\I_S\sat A\cond B$ and $\I_S\sat A\cond C$, but not $\I_S\sat A\cond B\land C$, against the closure of $S$ under (And). Regarding the second condition, for all $A,B$,  $A\in g_S(B)$ iff $\I_S\sat A\cond B$. Let $A\in g_S(B)\cap g_S(C)$. Then $\I_S\sat A\cond B$, $\I_S\sat A\cond C$, and, by (And), $\I_S\sat A\cond B\land C$, that implies $A\in g_S(B\land C)$, which concludes the proof.
\end{proof}

\nd Reasoning by cases is another well-known characteristics of classical reasoning, which is formalised by the \emph{Left Disjunction} reasoning pattern. To deal with it, for

\begin{description}
\item[Left Disjunction:]
\begin{eqnarray*}
\frac{ A\cond C,  \hspace{0.3cm} ~B\cond C }{ A\lor B\cond C } & & \text{(Or)} 
\end{eqnarray*}
\end{description}

\nd the semantic constraints to be imposed on a conditional interpretation $\I=(f,g)$ that characterise (Or) are: %
\begin{description}
\item[(Or$_{\I}$)] for all $A,B$:
\begin{enumerate}
    \item $\min_{\leq}(f(A)^{\uparrow} \cap f^{\uparrow}(B))\subseteq f(A\lor B)$;
    \item $g$ is $\lor$-closed.
\end{enumerate}
\end{description}







\begin{proposition}\label{prop_or}
A set of conditionals $S$ is closed under (Or) iff it can be characterised by a conditional model $\I=(f,g)$ that satisfies (Or$_{\I}$).
\end{proposition}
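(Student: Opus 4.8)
The plan is to argue the two implications separately, in the style of Propositions~\ref{prop_cut}--\ref{prop_and}; the soundness half is routine, but the completeness half cannot reuse the bare characteristic model and needs a small modification.

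\emph{Soundness ($\Leftarrow$).} Suppose $S=S_\I$ with $\I=(f,g)$ satisfying (Or$_\I$), and take any $\I\sat A\cond C$ and $\I\sat B\cond C$. Unfolding Definition~\ref{def_satisfaction} yields some $C_1\in f(A)$ and $C_2\in f(B)$ with $C_1,C_2\leq C$, together with $A,B\in g(C)$. Then $C\in f(A)^{\uparrow}\cap f(B)^{\uparrow}$; since $\lang$ is finitely generated, $\leq$ is well-founded, so this intersection has a $\leq$-minimal element $C'$ with $C'\leq C$. Clause~(1) of (Or$_\I$) gives $C'\in f(A\lor B)$, while clause~(2) together with $A,B\in g(C)$ gives $A\lor B\in g(C)$; hence $\I\sat(A\lor B)\cond C$, so $S_\I$ is closed under (Or). The only step needing care is extracting the minimal element $C'$ below $C$, which is exactly where finiteness of $\lang$ up to $\equiv$ is used.

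\emph{Completeness ($\Rightarrow$).} Let $S$ be closed under (Or). In contrast to (Cut), (And) and (Mon), here the characteristic model $\I_S$ need not satisfy clause~(1) of (Or$_\I$): e.g.\ if $S=\{a\cond(p\land q),\ b\cond(p\land r)\}$ (which is vacuously (Or)-closed, since (Or) never applies), then $f_S(a\lor b)=\emptyset$ whereas $f_S(a)^{\uparrow}\cap f_S(b)^{\uparrow}=(p\land(q\lor r))^{\uparrow}\neq\emptyset$. I would therefore keep $g=g_S$ but saturate the effect side, setting $f(A)\defined\lang$ for every $A$. Three checks remain. First, $\I=(f,g_S)$ still characterises $S$: because satisfaction is the conjunction of an $f$-side and a $g$-side condition and the $f$-side is now vacuous, $\I\sat A\cond C$ holds iff $A\in g_S(C)$ iff $A\cond C\in S$, exactly as in Proposition~\ref{prop_characteristicmodel}. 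Second, clause~(1) of (Or$_\I$) holds trivially, since $\min_{\leq}(f(A)^{\uparrow}\cap f(B)^{\uparrow})=\min_{\leq}(\lang)\subseteq\lang=f(A\lor B)$. Third, clause~(2) — that $g_S$ is $\lor$-closed — follows from (Or): if $A,B\in g_S(C)$ then $A\cond C,B\cond C\in S$, hence $A\lor B\cond C\in S$, i.e.\ $A\lor B\in g_S(C)$.

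\emph{Main obstacle.} The substantive observation is precisely that the canonical construction of the earlier proofs breaks for (Or): $f_S$ stores only the $\leq$-minimal consequents of each antecedent, so $f_S(A)^{\uparrow}$ and $f_S(B)^{\uparrow}$ can overlap even when no conditional with antecedent $A\lor B$ belongs to $S$, violating clause~(1). What unblocks the argument is the fact that satisfaction factors through two independent conditions, so the effect-side choice function may be enlarged arbitrarily — here to the whole language — without altering the set of satisfied conditionals, the $g$-side remaining the gatekeeper. This trivialises clause~(1) of (Or$_\I$) and concentrates all the content into clause~(2), which is just (Or) restated for $g_S$; everything else is a routine adaptation of the previous proofs.
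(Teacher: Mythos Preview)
Your proof is correct and rests on the same key insight as the paper's: in the completeness direction $\I_S$ need not satisfy clause~(1) of (Or$_\I$), but since $g_S$ alone already determines satisfaction (by construction $A\in g_S(C)$ iff $A\cond C\in S$), the function $f_S$ may be enlarged freely without changing the set of satisfied conditionals. The only difference is the specific enlargement chosen. The paper performs the \emph{minimal} fix, adding $\min_{\leq}(f_S(A)^{\uparrow}\cap f_S(B)^{\uparrow})$ to $f_S(A\lor B)$ for each disjunction, so that clause~(1) becomes an equality by fiat; you instead saturate completely, setting $f(A)=\lang$ for every $A$, which makes clause~(1) vacuous. Your route is a bit blunter but shorter, and your counterexample showing that $\I_S$ itself can fail clause~(1) is a nice addition that the paper leaves implicit. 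One trade-off worth noting: the paper's targeted extension is later reused when combining (Or$_\I$) with other semantic constraints (e.g.\ (Con$_\I$), which forbids $\bot\in f(A)$ for consistent $A$); your fully saturated $f$ would clash with such constraints, so the more conservative construction is what survives into the compatibility result of the following proposition.
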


\begin{proof}
From right to left. Assume that $S$ is characterised by a conditional model $\I=(f,g)$ that satisfies
(Or$_{\I}$), that is, $S=S_\I$. We need to prove that $S_{\I}$ is closed under (Or). Assume $\I\sat A\cond C$ and $\I\sat B\cond C$. Then there are $C'\in f(A)$ \st~$C'\leq C$, and $C''\in f(B)$ \st~$C''\leq C$. Then there must be some $C^*$ s.t. $C'\leq C^*$, $C''\leq C^*$, and $C^*\leq C$ ($C$ itself satisfies the constraint), and the minimal  among them w.r.t. $\leq$ are in $f(A\lor B)$ by condition~1.~of (Or$_{\I}$). Hence in $f(A\lor B)$ there is some $C^*$ s.t. $C^*\leq C$. But, $\I\sat A\cond C$ and $\I\sat B\cond C$ imply also that  $A,B\in g(C)$ and, thus, as $g$ is $\lor$-closed, $A\lor B\in g(C)$. Therefore, we can conclude $\I\sat A\lor B\cond C$. Therefore, $S_{\I}$ is closed under (Or).

From left to right. Let $S$ be a set of conditionals closed under (Or). We need to prove that there is a conditional interpretation $\I=(f,g)$ characterising it and satisfying (Or$_{\I}$). So, let us consider the characteristic model $\I_S$ as by Proposition~\ref{prop_characteristicmodel}. At first, we show that $\I_S$ satisfies the second condition of (Or$_{\I}$). In fact, by construction of $\I_S$, for all $C,D$, if $C\in g_S(D)$ then $C\cond D\in S$. Therefore, as $S$ is closed under (Or), $g_S$ must be $\lor$-closed.
On the other hand, if $\I_S$ does not satisfy the first condition of (Or$_{\I}$), we transform $\I_S$ into a model $\I'$ by extending $f_S$ only. Specifically, it is sufficient that for every disjunction $A\lor B$ we add  the set $\min_{\leq}(f_S)^{\uparrow}(A)\cap f_S^{\uparrow}(B))$ to $f_S(A\lor B)$. Now, it is easily verified that indeed $\I'$ satisfies exactly the same set of conditionals as $\I_S$, \ie~$S$. In fact, in $\I'$ we have an extension of $f_S$, while $g_S$ stays the same. Therefore, as by construction of $\I_S$,  $C\in g(D)$ iff $C\cond D\in S$, the same holds for $\I'$ and, thus, the set of satisfied conditionals by $\I'$ remains the same as for $\I_S$, \ie~$S$, which concludes the proof.
\end{proof}


\nd As mentioned in Section~\ref{sect_intro}, \emph{Right Weakening} is a property that is generally desirable in many context with some exceptions. To support the reasoning pattern of

\begin{description}
\item[Right Weakening:]
\begin{eqnarray*}
\frac{ A\cond B,  \hspace{0.3cm} \clent B\rightarrow C 
}{ A\cond C } & & \text{(RW)} 
\end{eqnarray*}
\end{description}

\nd the semantic constraint to be imposed on a conditional interpretation $\I=(f,g)$ that characterise (RW) is: %
\begin{description}
\item[(RW$_{\I}$)] for all $A,B$:
\begin{enumerate}
    \item if $A\leq B$ then $g(A)\subseteq g(B)$.
\end{enumerate}
\end{description}

%



\begin{proposition}\label{prop_rw}
A set of conditionals $S$ is closed under (RW) iff it can be characterised by a conditional model $\I=(f,g)$ that satisfies (RW$_{\I}$).
\end{proposition}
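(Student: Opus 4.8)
The plan is to follow the same two-directional template used for the earlier propositions; this case turns out to be the simplest of the lot, because (RW$_{\I}$) constrains only the choice function $g$ and leaves $f$ untouched.

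\emph{From right to left.} Suppose $S=S_\I$ for some $\I=(f,g)$ satisfying (RW$_{\I}$); I would show $S_\I$ is closed under (RW). Assume $\I\sat A\cond B$ and $\clent B\limp C$, i.e.\ $B\leq C$. From $\I\sat A\cond B$ there is $B'\in f(A)$ with $B'\leq B$, and $A\in g(B)$. By transitivity of $\leq$ we get $B'\leq C$, so condition~1 of Definition~\ref{def_satisfaction} holds for $A\cond C$. For condition~2, since $B\leq C$, (RW$_{\I}$) gives $g(B)\subseteq g(C)$, hence $A\in g(C)$. Therefore $\I\sat A\cond C$, and $S_\I$ is closed under (RW).

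\emph{From left to right.} Let $S$ be closed under (RW). I claim the characteristic model $\I_S=(f_S,g_S)$ of Proposition~\ref{prop_characteristicmodel} already works, with no modification needed (unlike, e.g., the (Or) case, where $f_S$ had to be enlarged). By that proposition $\I_S$ characterises $S$, so it only remains to verify (RW$_{\I}$). Let $A\leq B$ and let $D\in g_S(A)$. By construction $g_S(A)=\A_{A}=\{D\mid D\cond A\in S\}$, so $D\cond A\in S$; since $\clent A\limp B$, closure of $S$ under (RW) yields $D\cond B\in S$, i.e.\ $D\in\A_{B}=g_S(B)$. Hence $g_S(A)\subseteq g_S(B)$, so $\I_S$ satisfies (RW$_{\I}$), which concludes the proof.

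The only places requiring any care are the transitivity step $B'\leq B\leq C$ in the first direction and the observation that, because (RW$_{\I}$) places no requirement on $f$, the characteristic model can be reused verbatim in the second direction; I do not anticipate a genuine obstacle.
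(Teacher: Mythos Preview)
Your proof is correct and follows essentially the same approach as the paper's own proof: both directions are argued identically, using transitivity of $\leq$ for the $f$-condition in the right-to-left direction and verifying (RW$_\I$) directly on the characteristic model $\I_S$ in the left-to-right direction. The only differences are cosmetic (variable names and the explicit unpacking of $g_S(A)=\A_A$), so nothing further needs to be said.
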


\begin{proof}
From right to left. Assume that $S$ is characterised by a conditional model $\I=(f,g)$ that satisfies (RW$_{\I}$), that is, $S=S_\I$. We need to prove that $S_{\I}$ is closed under (RW). So, assume $\I\sat A\cond B$ and $\clent B\limp C$, \ie~$B \leq C$. Then there is some $B'\in f(A) $ \st~$B'\leq B$, and consequently $B'\leq C$. Since   $B \leq C$, $A\in g(B)$, by condition 1. we have $A\in g(C)$. Hence $\I\sat A\cond C$. $S_{\I}$ is closed under (RW).

From left to right. Let $S$ be a set of conditionals closed under (RW). We need to prove that there is a conditional interpretation $\I=(f,g)$ characterising it and satisfying (RW$_{\I}$). So, consider the characteristic model $\I_S$, assume $B\leq C$, \ie~ $\clent B\limp C$, and let $A\in g_S(B)$. By construction of $\I_S$, $A\in g_S(B)$ implies $\I\sat A\cond B$, that, by (RW), implies $\I\sat A\cond C$. By construction of $\I_S$, $A\in g_S(C)$, as desired. 
\end{proof}

\nd So far, we have taken under consideration most of the properties characterising classical entailment. However, we still miss two important consistency properties: namely,  \emph{ex falso quodlibet} and \emph{consistency preservation}. The former is a classical property strongly connected with classical implication and entailment, and stating that we can conclude anything from a false premise. This property, for example, is not fully desirable in counterfactual reasoning, where we would like to be able to reason coherently about false situation, but that are at least conceivable. 
Nevertheless,  to support the reasoning pattern of

\begin{description}
\item[Ex Falso Quodlibet:]
\begin{eqnarray*}
\frac{ \clent \neg A }{ A\cond B} & & \text{(EFQ)} 
\end{eqnarray*}
\end{description}

\nd the semantic constraints to be imposed on a conditional interpretation $\I=(f,g)$ that characterise (EFQ) are: %
\begin{description}
\item[(EFQ$_{\I}$)] for all $A$: if $A\equiv \bot$, then
\begin{enumerate}
    \item $\bot\in f(A)$;
    \item $A\in g(B)$, for all $B$.
\end{enumerate}
\end{description}





\begin{proposition}\label{prop_efq}
A set of conditionals $S$ is closed under (EFQ) iff it can be characterised by a conditional model $\I=(f,g)$ that satisfies (EFQ$_{\I}$).
\end{proposition}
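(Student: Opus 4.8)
The plan is to mirror the two-directional template of the preceding propositions, leaning on the characteristic model $\I_S$ from Proposition~\ref{prop_characteristicmodel}. For the direction from right to left, I would assume $S=S_\I$ for some conditional model $\I=(f,g)$ satisfying (EFQ$_\I$) and show that $S_\I$ is closed under (EFQ). So suppose $\clent\neg A$, \ie~$A\equiv\bot$. By clause~1 of (EFQ$_\I$) we have $\bot\in f(A)$, and since $\bot\leq B$ for every $B$, the first condition of Definition~\ref{def_satisfaction} is met with witness $B'=\bot$. By clause~2 of (EFQ$_\I$) we have $A\in g(B)$ for every $B$, so the second condition of Definition~\ref{def_satisfaction} holds as well. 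Hence $\I\sat A\cond B$, \ie~$A\cond B\in S_\I$, for every $B$, which is exactly closure under (EFQ).

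For the direction from left to right, let $S$ be closed under (EFQ); I would check that the characteristic model $\I_S=(f_S,g_S)$ itself already satisfies (EFQ$_\I$). Fix $A$ with $A\equiv\bot$. Closure under (EFQ) yields $A\cond B\in S$ for every $B$, hence $\C_A=\lang$ and $A\in\A_B$ for every $B$. Since $\bot$ is a $\leq$-minimal element of the whole language — nothing is strictly below it, as $\clent\bot\limp C$ always holds — we get $\bot\in\min_{\leq}(\C_A)=f_S(A)$, which is clause~1 of (EFQ$_\I$); and $g_S(B)=\A_B\ni A$ for every $B$, which is clause~2. As $\I_S$ characterises $S$ by Proposition~\ref{prop_characteristicmodel}, this settles the direction. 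If one prefers not to invoke the shape of $\min_{\leq}(\lang)$, the same conclusion follows by a minimal modification of $\I_S$ in the style of the proof of Proposition~\ref{prop_or}: put $f(A)=f_S(A)\cup\{\bot\}$ and add $A$ to every $g(B)$ whenever $A\equiv\bot$, leaving $f$ and $g$ as in $\I_S$ otherwise; enlarging $f$ and $g$ only on $\bot$-arguments adds no new satisfied conditional, because any conditional that becomes satisfied only through the enlargement must have an antecedent equivalent to $\bot$ and so already belongs to $S$ by closure under (EFQ), while every conditional of $S$ is still satisfied since $f,g$ only grew.

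I do not expect a genuine obstacle here; the single point requiring a moment's care is recognising that $\bot$ is itself a $\leq$-minimal formula of $\lang$, so that clause~1 of (EFQ$_\I$) is inherited for free by the characteristic model — equivalently, that the minimal modification just described leaves $S_\I$ unchanged.
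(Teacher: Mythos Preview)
Your proposal is correct and follows essentially the same approach as the paper: both directions use the characteristic model $\I_S$ directly, verifying clause~1 of (EFQ$_\I$) from $\bot\in\min_{\leq}(\C_A)$ and clause~2 from $A\in\A_B$ for all $B$. Your alternative ``minimal modification'' route is sound but superfluous here, since---as you yourself note---the characteristic model already satisfies (EFQ$_\I$) without adjustment.
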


\begin{proof}
From right to left. Assume that $S$ is characterised by a conditional model $\I=(f,g)$ that satisfies (EFQ$_{\I}$), that is, $S=S_\I$. We need to prove that $S_{\I}$ is closed under (EFQ). Assume $\clent\neg A$. We need to prove that $\I\sat A\cond B$ holds for all $B$. $\clent\neg A$ implies $A\equiv \bot$, hence, by (EFQ$_\I$), we have $\bot\in f(A)$, $\bot\leq B$ and, thus, $A\in g(B)$. Therefore, $\I\sat A\cond B$ and, thus, $S_{\I}$ is closed under (EFQ).

From left to right. Let $S$ be a set of conditionals closed under (EFQ). We need to prove that there is a conditional interpretation $\I=(f,g)$ characterising it and satisfying (EFQ$_{\I}$). So, consider the characteristic model $\I_S$ and let $\clent \neg A$. By (EFQ), $A\cond \bot\in S$ follows, and, since  $\bot\in \min_{\leq}(\C_{D})$, $\bot\in f_S(A)$ holds. Furthermore, by (EFQ), $A\cond B\in S$ holds, for all $B\in\lang$. Therefore, by construction of $\I_S$, $A\in g_S(B)$ holds, for any $B\in\lang$ and, thus, $\I_S$ satisfies (EFQ$_{\I}$), which concludes the proof.
\end{proof}

\nd Please note that (EFQ) is an immediate consequence of (RLE), (And) and (RW). However, we may have contexts  that do not satisfy some of these three properties, but still satisfies (EFQ). If this is the case, the semantic constraint (EFQ$_{\I}$) has to be considered.

Consistency preservation tells us that we cannot conclude absurdity from a classically consistent premise.
To support the reasoning pattern of

\begin{description}
\item[Consistency Preservation:]
\begin{eqnarray*}
\frac{ A\cond B,  \hspace{0.3cm} \clent \neg B }{  \clent \neg A } & & \text{(Con)} 
\end{eqnarray*}
\end{description}


\nd the semantic constraint to be imposed on a conditional interpretation $\I=(f,g)$ that characterise (Con) is: %
\begin{description}
\item[(Con$_{\I}$)] for all $A$, 
\begin{enumerate}
    \item if $B \in f(A)$, for some $B \leq \bot$,  then $A \leq \bot$.
\end{enumerate}
\end{description}

\nd Please note that only if we assume (RLE) we can express (Con) in the classical (equivalent) forms  

\vspace{-0.3cm}
\begin{eqnarray*}
\frac{ A\cond \bot }{  \clent \neg A } & & \frac{ \not\clent \neg A }{ A\not\cond \bot }\\ 
\end{eqnarray*}

\vspace{-0.3cm}


\nd where the reading of the latter is: ``if $\neg A$ is not a tautology then the conditional $A\cond \bot$ cannot be concluded". 

\begin{proposition}\label{prop_con}
A set of conditionals $S$ is closed under (Con) iff it can be characterised by a conditional model $\I=(f,g)$ that satisfies (Con$_{\I}$).
\end{proposition}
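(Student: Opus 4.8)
The plan is to follow the same two-direction template used for all the preceding structural properties, exploiting that, under the $\leq$-representation, $\clent\neg B$ is exactly $B\leq\bot$.

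For the right-to-left direction I would assume $S=S_\I$ for some $\I=(f,g)$ satisfying (Con$_\I$), and show that $S_\I$ is closed under (Con). Taking $\I\sat A\cond B$ together with $\clent\neg B$, i.e.\ $B\leq\bot$, Definition~\ref{def_satisfaction} supplies some $B'\in f(A)$ with $B'\leq B$, hence $B'\leq\bot$; then condition~1 of (Con$_\I$) immediately yields $A\leq\bot$, that is $\clent\neg A$, which is the conclusion of (Con).

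For the left-to-right direction I would take $S$ closed under (Con) and verify that the \emph{unmodified} characteristic model $\I_S$ of Proposition~\ref{prop_characteristicmodel} already satisfies (Con$_\I$). Indeed, if $B\in f_S(A)=\min_{\leq}(\C_{A})$ for some $B\leq\bot$, then $B\in\C_{A}$, so $A\cond B\in S$; since $B\leq\bot$ means $\clent\neg B$, closure of $S$ under (Con) gives $\clent\neg A$, i.e.\ $A\leq\bot$. As $\I_S$ characterises $S$ by Proposition~\ref{prop_characteristicmodel}, this concludes the argument.

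The only point needing a little care — though it is still routine — is observing that no modification of $\I_S$ is required here, in contrast to the (Ref) and (Or) cases: (Con$_\I$) constrains $f_S$ only in a ``downward'' direction (a $\bot$-equivalent formula may appear in $f_S(A)$ only when $A$ is itself inconsistent), and this is inherited directly from closure under (Con); moreover (Con$_\I$) places no constraint on $g$, mirroring the fact that (Con) has no conclusion of the form $C\cond D$. So I do not anticipate a genuine obstacle, merely the bookkeeping of matching the $\leq/\bot$ notation to the $\clent\neg(\cdot)$ notation in the rule.
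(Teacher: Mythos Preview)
Your proposal is correct and follows essentially the same approach as the paper: the right-to-left direction is identical, and for the left-to-right direction both you and the paper use the unmodified characteristic model $\I_S$, the only cosmetic difference being that the paper argues by contrapositive (assuming $\bot<A$ and $B\leq\bot$, then showing $B\notin f_S(A)$) whereas you argue directly.
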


\begin{proof}
From right to left. Assume that $S$ is characterised by a conditional model $\I=(f,g)$ that satisfies (Con$_{\I}$), that is $S=S_\I$.  We need to prove that $S$ is closed under (Con). So, assume $\I\sat A\cond B$ and $\clent \neg B$, \ie~$B\leq\bot$. We need to prove that $\clent \neg A$ holds. 
$\I\sat A\cond B$ implies that there is $B' \in f(A)$ \st~$B' \leq B$, hence $B' \leq \bot$. Therefore,
by (Con$_{\I}$) we have $A \leq \bot$, that is, $\clent \neg A$.

From left to right. Let $S$ be a set of conditionals satisfying (Con). We need to prove that there is a conditional interpretation $\I=(f,g)$ characterising it and satisfying (Con$_{\I}$). We prove that the characteristic model $\I_S$ is such an interpretation, by proving that for any $A$, if $\bot<A$ then there is no $B\leq\bot$ s.t. $B\in f_S(A)$. Let $\bot< A$ and $B\leq \bot$; hence $\not\clent\neg A$ and $\clent\neg B$. By (Con), $A\cond B\notin S$. By construction of $\I_S$, $A\cond B\notin S$ implies that $B$ is not in $f_S(A)$, since otherwise we would have $A\cond B\in S$.

\end{proof}

\nd A stronger property that connects conditional reasoning to classical entailment is \emph{supraclassicality}, that is, the conditional systems extends classical reasoning.
To support the reasoning pattern of

\begin{description}
\item[Supraclassicality:]
\begin{eqnarray*}
\frac{ \clent A\limp B }{  A\cond B } & & \text{(Sup)} 
\end{eqnarray*}
\end{description}

\nd the semantic constraints to be imposed on a conditional interpretation $\I=(f,g)$ that characterise (Sup) are: %
\begin{description}
\item[(Con$_{\I}$)] for all $A$, 
\begin{enumerate}
    \item $A$ is a fixed-point of $f$;
    \item $A^{\downarrow}\subseteq g(A)$.
\end{enumerate}
\end{description}






\begin{proposition}\label{prop_sup}
A set of conditionals $S$ is closed under (Sup) iff it can be characterised by a conditional model $\I=(f,g)$ that satisfies (Sup$_{\I}$).
\end{proposition}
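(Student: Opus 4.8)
The plan is to reuse the bidirectional template of the previous propositions: the right-to-left direction unfolds Definition~\ref{def_satisfaction} directly, while the left-to-right direction enlarges the characteristic model $\I_S$ of Proposition~\ref{prop_characteristicmodel}, exactly as was done for (Ref) in Proposition~\ref{prop_ref}.

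\textbf{Right to left.} Assume $S=S_\I$ for some $\I=(f,g)$ satisfying (Sup$_\I$), and let $\clent A\limp B$, i.e.\ $A\leq B$. Condition~1 of (Sup$_\I$) gives $A\in f(A)$, so $B'=A$ witnesses the first clause of Definition~\ref{def_satisfaction} since $A\leq B$. Condition~2 of (Sup$_\I$), instantiated at $B$, gives $B^{\downarrow}\subseteq g(B)$, and $A\leq B$ means $A\in B^{\downarrow}$, hence $A\in g(B)$, which is the second clause. Thus $\I\sat A\cond B$, so $S_\I$ is closed under (Sup).

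\textbf{Left to right.} Let $S$ be closed under (Sup), and modify the characteristic model $\I_S=(f_S,g_S)$ into $\I=(f,g)$ by setting $f(A)=f_S(A)\cup\{A\}$ and $g(A)=g_S(A)\cup A^{\downarrow}$ for all $A$. Then (Sup$_\I$) holds by construction. To see that $\I$ still characterises $S$: the inclusion $S\subseteq S_\I$ is immediate because $f_S(A)\subseteq f(A)$ and $g_S(B)\subseteq g(B)$, so any triangle witnessing $A\cond B\in S$ in $\I_S$ survives in $\I$. Conversely, if $\I\sat A\cond B$ then $A\in g(B)=g_S(B)\cup B^{\downarrow}$; if $A\in g_S(B)$ then $A\cond B\in S$ by the construction of $\I_S$, and if $A\in B^{\downarrow}$ (that is, $\clent A\limp B$) then $A\cond B\in S$ by (Sup). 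Hence $S_\I=S$.

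\textbf{Expected obstacle.} The argument is essentially bookkeeping; the only point needing attention is that enlarging $f_S$ and $g_S$ must not create satisfied conditionals outside $S$. This is precisely what the case split above secures: satisfaction always requires $A\in g(B)$, the only new elements of $g(B)$ are the formulae below $B$, and (Sup) places exactly those conditionals in $S$, while the extra fixed-points added to $f$ are inert unless the $g$-side condition already holds. So I expect no real difficulty beyond keeping the two enlargements of $\I_S$ aligned with the two clauses of (Sup$_\I$).
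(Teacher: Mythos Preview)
Your proof is correct and follows essentially the same template as the paper. The only difference is that the paper observes that $g_S$ already satisfies condition~2 of (Sup$_\I$) --- since $S$ closed under (Sup) means $A\leq D$ implies $A\cond D\in S$, hence $A\in g_S(D)$, so $D^{\downarrow}\subseteq g_S(D)$ --- and therefore modifies only $f_S$; your enlargement of $g_S$ is thus a no-op, and your case split on $g(B)=g_S(B)\cup B^{\downarrow}$ collapses to the single case $A\in g_S(B)$.
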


\begin{proof}
From right to left. Assume that $S$ is characterised by a conditional model $\I=(f,g)$ and, thus, $S= S_\I$, that satisfies (Sup$_{\I}$). We need to prove that $S_{\I}$ is closed under (Sup). So, assume $\clent A\limp B$, \ie~$A \leq B$. Then $A\in B^{\downarrow}$, hence $A\in g(B)$, $A\in f(A)$, and $A\leq B$, hence $\I\sat A\cond B$.

From left to right. Let $S$ be a set of conditionals satisfying (Sup). We need to prove that there is a conditional interpretation $\I=(f,g)$ characterising it and satisfying (Sup$_{\I}$). Consider the characteristic model $\I_S$: it clearly satisfies the second condition, the one over $g$. It is possible it does not satisfy the condition over $f$, in case $S$ contains some conditional $A\cond B$ with $B< A$. To cover such a case it is sufficient to modify $\I_S$ into a model $\I$ in the same way as done in the proof of Proposition~\ref{prop_ref}. $\I$ is a characteristic model of $S$ satisfying both the conditions in (Sup$_{\I}$).
\end{proof}

\nd Please note that \ii{i} (Sup) is a consequence of (Ref) and (RW) together, but it is not equivalent to the combination of those two properties; and
\ii{ii} if we change the second condition in (Sup$_\I$) into $A^{\downarrow}= g(A)$, we model the classical propositional entailment (proof omitted).

A main portion of the research in conditional reasoning has focused on forms of defeasible reasoning. Defeasible reasoning is characterised by a degree of uncertainty connected some of the drawn conclusions that may be revised when faced with more complete and specific information. Presumptive reasoning, that is, reasoning based on expectations, represents the most popular context in which it is necessary to constraint (Mon). The basic form of constrainted monotonicity is \emph{Cautious Monotonicity}. 
To support the reasoning pattern of

\begin{description}
\item[Cautious Monotonicity:]
\begin{eqnarray*}
\frac{ A\cond B,  \hspace{0.3cm} A\cond C }{ A\land B\cond C } & & \text{(CM)} 
\end{eqnarray*}
\end{description}

\nd the semantic constraints to be imposed on a conditional interpretation $\I=(f,g)$ that characterise (CM) are: %
\begin{description}
\item[(CM$_{\I}$)] for all $A,B$, 
\begin{enumerate}
    \item if $A\triangle B$, then $f(A\land B)\Sleq f(A)$;
    \item if $A\in g(B)\cap g(C)$ then $A\land B\in g(C)$.
\end{enumerate}
\end{description}

%





\begin{proposition}\label{prop_cm}
A set of conditionals $S$ is closed under (CM) iff it can be characterised by a conditional model $\I=(f,g)$ that satisfies (CM$_{\I}$).
\end{proposition}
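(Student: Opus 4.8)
The plan is to follow exactly the template used for Propositions~\ref{prop_cut}--\ref{prop_sup}, proving the two implications separately, and to exploit the fact that (CM) is essentially dual to (Cut): whereas (Cut) derives $A\cond C$ from $A\land B\cond C$ and $A\cond B$, (CM) derives $A\land B\cond C$ from $A\cond B$ and $A\cond C$, which is precisely why the Smyth inequality in the first clause of (CM$_{\I}$) points the opposite way to that of (Cut$_{\I}$). Throughout I will use the observation that, for a fixed interpretation, the predicate $A\triangle B$ is literally the same thing as satisfaction of $A\cond B$ (a triangle $A\xrightarrow{f}B'\leq B\xrightarrow{g}A$ through some $B'\leq B$ is exactly conditions~1 and~2 of Definition~\ref{def_satisfaction}).

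For the right-to-left (soundness) direction, assume $S=S_\I$ for some $\I=(f,g)$ satisfying (CM$_{\I}$), and suppose $\I\sat A\cond B$ and $\I\sat A\cond C$. From $\I\sat A\cond C$ I extract some $C'\in f(A)$ with $C'\leq C$ and $A\in g(C)$; from $\I\sat A\cond B$ I extract $A\in g(B)$, and I note that $A\triangle B$ holds. Then clause~1 of (CM$_{\I}$) gives $f(A\land B)\Sleq f(A)$, so there is $C''\in f(A\land B)$ with $C''\leq C'\leq C$; clause~2, applied to $A\in g(B)\cap g(C)$, gives $A\land B\in g(C)$. Together, $C''\in f(A\land B)$, $C''\leq C$ and $A\land B\in g(C)$ witness $\I\sat A\land B\cond C$, so $S_\I$ is closed under (CM).

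For the left-to-right (completeness) direction, let $S$ be closed under (CM); I claim the characteristic model $\I_S=(f_S,g_S)$ of Proposition~\ref{prop_characteristicmodel} already satisfies (CM$_{\I}$) with no modification (unlike the (Or) or (Sup) cases). For clause~1, suppose $A\triangle B$ and pick any $C\in f_S(A)$; since $f_S(A)=\min_{\leq}(\C_A)\subseteq\C_A$ we get $A\cond C\in S$, while $A\triangle B$ means $\I_S\sat A\cond B$, \ie~$A\cond B\in S$ by Proposition~\ref{prop_characteristicmodel}. Closure under (CM) yields $A\land B\cond C\in S$, so $C\in\C_{A\land B}$, and hence there is $C'\in\min_{\leq}(\C_{A\land B})=f_S(A\land B)$ with $C'\leq C$; this is exactly $f_S(A\land B)\Sleq f_S(A)$. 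For clause~2, if $A\in g_S(B)\cap g_S(C)$ then $A\cond B,A\cond C\in S$ by construction of $g_S$, so (CM) gives $A\land B\cond C\in S$, \ie~$A\land B\in\A_C=g_S(C)$.

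I expect no real obstacle here; the two points needing care are (i) recording that in $\I_S$ the predicate $A\triangle B$ coincides with $\I_S\sat A\cond B$ and hence with $A\cond B\in S$, which is what lets clause~1 be discharged by (CM) just as in the (Cut) proof, and (ii) getting the direction of the Smyth order right: since $f_S(A\land B)\Sleq f_S(A)$ unfolds as ``for every $C\in f_S(A)$ there is $C'\in f_S(A\land B)$ with $C'\leq C$'', it is an element of $f_S(A)$ (not of $f_S(A\land B)$) that must be fed into (CM). If preferred, both directions can be phrased uniformly in terms of $\triangle$, with identical bookkeeping.
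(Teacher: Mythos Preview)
Your proposal is correct and follows essentially the same approach as the paper's own proof: both directions are argued exactly as in the (Cut) case with the Smyth inequality reversed, and the completeness direction uses the unmodified characteristic model $\I_S$ together with the observation that $A\triangle B$ in $\I_S$ coincides with $A\cond B\in S$. Your write-up is in fact slightly more explicit than the paper's about why $C\in f_S(A)$ yields $A\cond C\in S$ and why $A\land B\cond C\in S$ guarantees a $C'\leq C$ in $f_S(A\land B)$, but the argument is otherwise identical.
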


\begin{proof}
From right to left. Assume that $S$ can be characterised by a conditional model $\I=(f,g)$ and, thus, $S= S_\I$, that satisfies (CM$_{\I}$). We need to prove that $S_{\I}$ is closed under (CM). So, assume $\I\sat A\cond B$ and $\I\sat A\cond C$. Therefore, there is some $C'\in f(A)$ s.t. $C'\leq C$, and $f(A\land B)\leq_S f(A)$. As a consequence, there is some $C''\in f(A \land B)$ s.t. $C''\leq C'$, that is, $C''\leq C$. Regarding $g$, we have $A\in g(B)$ and $A\in g(C)$, hence $A\land B\in g(C)$. Therefore, $\I\sat A\land B\cond C$ holds.

From left to right. Let $S$ be a set of conditionals closed under (CM). We need to prove that there is a conditional interpretation $\I=(f,g)$ characterising it and satisfying (CM$_{\I}$). Consider the characteristic model $\I_S$ as by Proposition \ref{prop_characteristicmodel}. We need to prove that it satisfies the two conditions of (CM$_{\I}$). Let $A\triangle B$ and $C\in f_S(A)$. $A\triangle B$ implies $\I_S\sat A\cond B$, which by Proposition \ref{prop_characteristicmodel} implies $A\cond B\in S$. By construction of $\I_S$, $C\in f_S(A)$ implies that $A\cond C\in S$. From $\{A\cond B, A\cond C\}\subseteq S$ and (CM) we have that $A\land B\cond C\in S$, and, by Proposition~\ref{prop_characteristicmodel}, $\I_S\sat A\land B\cond C$. 
That is, there is a $C'$ s.t. $C'\in f_S(A\land B)$ and $C'\leq C$. Therefore, $f_S(A\land B)\Sleq f_S(A)$ holds.  Regarding the second condition on $g_S$, let $A\in g_S(B)\cap g_S(C)$. By construction of $\I_S$, $A\cond B$ and $A\cond C$ are in $S$, and by (CM) $A\land B\cond C\in S$, that is, by construction of $\I_S$, $A\land B\in g_s(C)$, which concludes the prove.
\end{proof}

\nd Beyond being a desirable property from the point of view of many reasoning contexts, such as presumptive and prototypical reasoning \cite{KrausEtAl1990}, (CM) if formally important because combining it with (Cut) we obtain \emph{Cumulativity}:

\begin{description}
\item[Cumulativity:]
\begin{eqnarray*}
\text{If }A\cond B\text{ then } (A\cond C\text{ iff }A\land B\cond C ) & \text{(Cumul)}
\end{eqnarray*}
\end{description}

\nd (Cumul) is formally important because entailment relations satisfying (Cumul) satisfy also \emph{Idempotence}, a classical  closure property. 

\mycomment{lo posso lasciare cosí, o  dovrei dedicare più spazio a spiegare cumulativit\'a ed idempotenza, secondo te? SI}


\nd The semantic constraints to be imposed on a conditional interpretation $\I=(f,g)$ that characterise (Cumul) are obtained by combining (Cut$_\I$) and (CM$_\I$): that is,

\begin{description}
\item[(Cumul$_\I$)] for all $A,B,C$, 
\begin{enumerate}
    \item If $A\triangle B$ then $f(A)\cong f(A\land B)$;
    \item If $A\in g(B)$ then ( $A\in g(C)$ iff $A\land B\in g(C)$ ).
\end{enumerate}
\end{description}

\nd Proceeding in this way we can introduce many other structural properties / reasoning patterns as formal constraints specified over the the functions $f$ and $g$. For example, consider (AntiRW), a form of constrained (RW)~\cite{CasiniEtAl2019}:

\begin{description}
\item[Anti Right Weakening:]
\begin{eqnarray*}
\frac{ A\cond B,  \hspace{0.3cm} \clent B\limp C,  \hspace{0.3cm} \clent C\limp D,  \hspace{0.3cm}  A\not\cond C } { A\not\cond D } & & \text{(AntiRW)} 
\end{eqnarray*}
\end{description}

\nd Or, equivalently,

\begin{eqnarray*}
\frac{ A\cond B,  \hspace{0.3cm} \clent B\limp C,  \hspace{0.3cm} \clent C\limp D,  \hspace{0.3cm}  A\cond D } { A\cond C } & & \text{(AntiRW*)} 
\end{eqnarray*}

\nd (AntiRW), that is implied by (RW), states that we can weaken the conclusions, but, once we \emph{block} the right weakening process, we cannot recover it anymore. It is a property that, for example, appears appropriate for some causal or deontic forms of reasoning (see~\cite{CasiniEtAl2019} for  details). 

We can enforce (AntiRW) in our framework via the following semantic constraints:

\begin{description}
\item[(AntiRW$_\I$)] for all $A,B,C,D$, 
\begin{enumerate}
    \item if $A\in g(B)$, $A\in g(D)$ and $B\leq D$ then  $B\leq C\leq D$ implies $A\in g(C)$ .
\end{enumerate}
\end{description}



\begin{proposition}\label{prop_antirw}
A set of conditionals $S$ is closed under (AntiRW) iff it can be characterised by a conditional model $\I=(f,g)$ that satisfies (AntiRW$_{\I}$).
\end{proposition}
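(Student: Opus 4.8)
The plan is to follow the same two-directional template used for all the structural properties above, mirroring in particular the proof of Proposition~\ref{prop_rw}, since (AntiRW$_\I$) is a constraint on the function $g$ alone and no constraint on $f$ is needed. Throughout I will work with the equivalent contrapositive form (AntiRW*), which is the one that fits the model-theoretic reasoning naturally.

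For the direction from right to left, assume $\I=(f,g)$ satisfies (AntiRW$_\I$) and $S=S_\I$; the goal is to show $S_\I$ is closed under (AntiRW*). Suppose $\I\sat A\cond B$, $\clent B\limp C$, $\clent C\limp D$ (\ie~$B\leq C\leq D$) and $\I\sat A\cond D$, and aim for $\I\sat A\cond C$. From $\I\sat A\cond B$ we obtain some $B'\in f(A)$ with $B'\leq B$, hence $B'\leq C$ by transitivity of $\leq$; this already secures the first satisfaction condition for $A\cond C$, with no appeal to any property of $f$. From $\I\sat A\cond B$ and $\I\sat A\cond D$ we have $A\in g(B)$ and $A\in g(D)$, and since $B\leq D$ and $B\leq C\leq D$, (AntiRW$_\I$) yields $A\in g(C)$, which is the second satisfaction condition. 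Hence $\I\sat A\cond C$, and so $S_\I$ is closed under (AntiRW).

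For the direction from left to right, let $S$ be closed under (AntiRW); I will verify that the characteristic model $\I_S$ of Proposition~\ref{prop_characteristicmodel}, used unchanged, satisfies (AntiRW$_\I$). Assume $A\in g_S(B)$, $A\in g_S(D)$, $B\leq D$ and $B\leq C\leq D$. By construction of $\I_S$, $A\in g_S(B)$ and $A\in g_S(D)$ mean $A\cond B\in S$ and $A\cond D\in S$; combining this with $\clent B\limp C$, $\clent C\limp D$ and closure of $S$ under (AntiRW*) gives $A\cond C\in S$, that is, $A\in g_S(C)$, as required.

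The argument is essentially routine; the only subtlety worth flagging is the observation --- already visible in the shape of (AntiRW$_\I$) --- that the relevant effect witnessing the weaker conclusion $C$ is inherited directly from the premise $A\cond B$, so that, unlike \eg~(Cut$_\I$) or (CM$_\I$), no condition on $f$ is needed, and hence it suffices to reason with the contrapositive form (AntiRW*) rather than with the awkward ``if\ldots then'' shape of (AntiRW) itself.
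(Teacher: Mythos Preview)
Your proof is correct and follows essentially the same route as the paper's own argument: both directions are handled via the contrapositive form (AntiRW*), the right-to-left direction uses the $f$-witness from $A\cond B$ together with (AntiRW$_\I$) to obtain $A\in g(C)$, and the left-to-right direction checks (AntiRW$_\I$) directly on the unmodified characteristic model~$\I_S$. Your closing remark about why no constraint on $f$ is needed is a nice clarification that the paper leaves implicit.
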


\begin{proof}
From right to left.  Assume that $S$ can be characterised by a conditional model $\I=(f,g)$ and, thus, $S= S_\I$, that satisfies (AntiRW$_{\I}$). We prove that $S_{\I}$ is closed under (AntiRW*) (that is equivalent to (AntiRW)). So, assume $\I\sat A\cond B$ and $\I\sat A\cond D$, with $B \leq C \leq D$. Then there is some $B'\in f(A)$ s.t. $B'\leq B\leq C\leq D$. Also, $A\in g(B)$ and $A\in g(D)$, that, by condition 1. of (AntiRW$_{\I}$), imply $A\in g(C)$. The latter and $B'\leq C$ imply $\I\sat A\cond C$, as desired.

From left to right. Let $S$ be a set of conditionals closed under (AntiRW*). We need to prove that there is a conditional interpretation $\I=(f,g)$ characterising it and satisfying (AntiRW$_{\I}$). Let us consider the characteristic model $\I_S$, and we prove that it satisfies the condition (AntiRW$_{\I}$). So, 
let $A\in g_S(B)$, $A\in g_S(D)$, and $B\leq C\leq D$. By the construction of $\I_S$ we have $\I_S\sat A\cond B$ and $\I_S\sat A\cond D$. Since $B\leq C\leq D$ and $S$ is closed under (AntiRW*), $\I_S\sat A\cond C$, that implies $A\in g_S(C)$. Hence condition 1. is satisfied, which completes the prove.
\end{proof}

\nd Finally, let $\mathcal{P}$ be the set of structural properties presented in this section.  We have taken under consideration each of them, and we have given a semantic counterpart in our framework. Each semantic property is a sufficient condition for obtaining a characterising model, but not a necessary condition. Specifically, given any set of conditionals $S$ closed under some structural property (X), we have proved that there must be a characterising model satisfying (X$_\I$), not that every model characterising $S$ must satisfy (X$_\I$).

In the following, we clarify whether all these semantic properties are compatible among them. That is, given a set of conditionals closed under some of the  structural properties in $\mathcal{P}$, we are going to answer to the problem whether there is a characterising model closed under all the correspondent semantic properties.

\begin{proposition}
Let $\mathcal{X}\subseteq\mathcal{P}$ be a set of structural properties in $\mathcal{P}$, and  $\mathcal{X}_\I$ be the set of the correspondent semantic properties. If a set $S$  of conditionals is closed under the properties in $\mathcal{X}$, then there is a conditional interpretation characterising $S$ and satisfying all the properties in $\mathcal{X}_\I$.
\end{proposition}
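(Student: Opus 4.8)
The plan is to prove this by constructing, for any set $S$ closed under the structural properties in $\mathcal{X}$, a single conditional interpretation that characterises $S$ and simultaneously satisfies all the semantic properties in $\mathcal{X}_\I$. The natural starting point is the characteristic model $\I_S = (f_S, g_S)$ of Proposition~\ref{prop_characteristicmodel}, but as the individual propositions already revealed, the ``raw'' characteristic model sometimes needs local repairs: for (Ref) and (Sup) we must add $A$ to $f_S(A)$, and for (Or) we must close $f_S$ under the $\min_{\leq}$ of the relevant upper-set intersections, and for (EFQ) we must add $\bot$ to $f_S(A)$ whenever $A\equiv\bot$. So the first step is to take $\I_S$ and apply all the required $f$-side modifications at once, obtaining a model $\I=(f,g)$ with $g=g_S$ and $f(A)$ equal to $f_S(A)$ augmented by the union of whatever extra elements are demanded by the properties in $\mathcal{X}$.

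The second step is to verify that this combined $\I$ still characterises $S$. Here the key observation — already used repeatedly in the individual proofs — is that $g=g_S$ is left untouched, and since $\I_S\sat A\cond B$ requires $A\in g_S(B)$, which by construction of $\I_S$ holds iff $A\cond B\in S$, the $g$-component alone already pins down membership in $S$ from above. Enlarging $f$ can only make condition~1 of Definition~\ref{def_satisfaction} easier to satisfy, so we must check that the enlargement never creates a spurious triangle: but whenever some new $B'$ is added to $f(A)$, it is added precisely because $A\cond B'$ (or some consequence) is already forced to be in $S$ by the closure of $S$ under the relevant property, so any triangle $A\triangle B$ it newly enables corresponds to a conditional $A\cond B$ that closure of $S$ guarantees is in $S$. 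Hence $S_\I = S_{\I_S} = S$.

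The third and main step is to check that the combined model satisfies every property in $\mathcal{X}_\I$ simultaneously — this is where the real work (and the main obstacle) lies. The concern is interference: a modification introduced to satisfy property (X$_\I$) might break a condition needed for (Y$_\I$). The plan is to argue this cannot happen by exploiting the following structural features. First, all the $g$-side conditions ($\equiv$-, $\leq$-, $\land$-, $\lor$-closure, the Cut/CM/AntiRW conditions, and the fixed-point/down-set conditions for Ref, Sup, EFQ) are conditions on $g_S$ alone, and since $g_S$ is defined directly from $S$ as $g_S(D)=\A_D$, each such condition is exactly equivalent to the corresponding closure condition on $S$, which holds by hypothesis; these conditions are mutually independent and none is disturbed by the $f$-side repairs. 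Second, the $f$-side conditions must be re-examined against the \emph{enlarged} $f$: the monotonicity-type conditions (S-monotonicity for Mon, $f(A)\Sleq f(A\land B)$ for Cut, $f(A\land B)\Sleq f(A)$ for CM) are all of the form ``every element of one $f$-set is $\geq$ some element of another'', and enlarging $f$-sets with elements that themselves correspond to conditionals in $S$ preserves these, because the closure of $S$ under the companion structural property supplies the required witness on the other side. The delicate sub-case is (And$_\I$) condition~1, which says $\min_{\leq}(f(A))$ is an $\equiv$-class: here one must check that adding $A$ itself (for Ref/Sup) or $\bot$ (for EFQ) does not destroy uniqueness of the minimal element. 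For $\bot$ this is fine because $\bot$ is added only when $A\equiv\bot$, in which case $\min_{\leq}(f(A))=\{\bot\}$ up to $\equiv$ anyway; for adding $A$ to $f(A)$ one observes that if $S$ is closed under (And) and contains $A\cond A$ (which it does once Ref holds) then $A\cond B$ for minimal $B\in f_S(A)$ forces $A\cond(A\land B)$, so $A$ is not below the old minimal elements unless it is $\equiv$ to them — a short argument using closure of $S$ under the relevant combination. Assembling these observations gives that the single combined $\I$ meets all of $\mathcal{X}_\I$, completing the proof.

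The main obstacle, then, is precisely this interference analysis: it requires going property-by-property through the $f$-side conditions and checking robustness under the union of all repairs, using in each case that the repair elements are ``justified'' by conditionals already in $S$. I expect this to be routine but somewhat tedious, and the cleanest write-up will isolate the handful of genuinely interacting pairs (Ref/Sup with And; EFQ with Con and And) and dispatch the rest with the uniform remark that $g$-conditions depend only on $S$ and $f$-enlargements preserve the universally-quantified ``$\exists$ smaller witness'' form of the remaining conditions.
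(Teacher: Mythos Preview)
Your overall architecture is exactly the paper's: start from the characteristic model $\I_S=(f_S,g_S)$, leave $g_S$ untouched, enlarge $f_S$ with whatever extra elements the properties in $\mathcal{X}$ demand, and argue that characterisation is preserved because membership $A\in g_S(B)$ alone already decides $A\cond B\in S$. The paper also isolates (Ref), (Sup) and (Or) as the only properties whose semantic counterpart is not automatically satisfied by $\I_S$ and that therefore force a repair of $f_S$.

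There is, however, a concrete gap in your interference analysis. Your ``uniform remark'' covers only conditions of the Smyth/``$\exists$ smaller witness'' shape (Mon, Cut, CM), and you single out (And$_\I$) as the delicate case. But (LLE$_\I$) condition~1 is the strict equality $f(A)=f(B)$ whenever $A\equiv B$, which is neither a $g$-condition nor of the ``$\exists$ smaller witness'' form, and it is precisely this condition that your (Ref)/(Sup) repair breaks: adding $A$ to $f_S(A)$ and $B$ to $f_S(B)$ gives $f(A)\neq f(B)$ for $A\equiv B$ with $A\neq B$ as formulae, even when $f_S(A)=f_S(B)$. The same issue arises for the (Or) repair, which adds the set $\min_{\leq}(f_S(A)^{\uparrow}\cap f_S(B)^{\uparrow})$ to $f_S(A\lor B)$ but not to $f_S(C)$ for other $C\equiv A\lor B$. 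The paper's fix is to strengthen the repair in the presence of (LLE): for (Ref)/(Sup) one sets $f(A)=f_S(A)\cup\{B\mid B\equiv A\}$, and for (Or) one adds the extra set to $f_S(C)$ for every $C\equiv A\lor B$. Your list of ``genuinely interacting pairs'' (Ref/Sup with And; EFQ with Con and And) misses this, and your fallback clause does not catch it either.

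Two smaller points. First, (EFQ) does not need an $f$-repair: if $S$ is closed under (EFQ) then $A\cond\bot\in S$ for every $A\equiv\bot$, so $\bot\in\min_{\leq}(\C_A)=f_S(A)$ already. Second, (Or$_\I$) condition~1 is a subset inclusion, not a Smyth inequality, so it is also not covered by your uniform remark; you should check explicitly that the (Ref)/(Sup) additions do not introduce new minimal elements of $f(A)^{\uparrow}\cap f(B)^{\uparrow}$ outside $f(A\lor B)$ (they do not, once $S$ is closed under the relevant properties, but the argument is not the monotonicity-type one you sketch).
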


\begin{proof}
(Sketch) Let $\mathcal{P'}=\mathcal{P}\setminus\{\text{(Ref),(Sup),(Or)}\}$. If $\mathcal{X}\subseteq \mathcal{P'}$ the proof is straightforward: as seen in the proof of the propositions in this section, given a set $S$ satisfying any property in $\mathcal{P'}$, the characteristic model $\I_S$ satisfies the correspondent semantic property. So, if we are dealing only with properties in $\mathcal{P'}$, the characteristic model of $S$ is the model we are looking for. It remains to take under consideration the combinations between properties in $\mathcal{P'}$ and $\{\text{(Ref),(Sup),(Or)}\}$. 

For (Ref) in Proposition \ref{prop_ref} we have extended $f_S$ in the model $\I_S$ into a function $f$ s.t. $f(A)=f_S(A)\cup\{A\}$ for every proposition $A$, and that the new model satisfies the same set of conditionals $S$. It is easy to check that the satisfaction of any property in $\mathcal{P}$ and of their semantic counterparts is preserved in this extension of $f$, with the only exception of (LLE), that requires a further extension of $f$: namely, for every $A$, $f(A)=f_S(A)\cup\{B\mid B\equiv A\}$. It is easy to check that, given any set of conditionals closed under (LLE) and (Ref), this further extension of $f$ \wrt~$f_S$ does not affect neither the set of conditionals satisfied by the model (that is, it is still the characteristic model of the initial set $S$), nor the satisfaction of the other semantic properties. 

For (Sup) we introduce the same extension to $f_S$, and the same argument applies.

For (Or) in Proposition \ref{prop_or}, we define a model $\I$ that extends $\I_S$ by adding $\min_{\leq}(f_S(A)^{\uparrow} \cap f_S(B)^{\uparrow}$ to $f_S(A\lor B)$, for every disjunction $A\lor B$. Again, this change of $\I_S$ does not affect any of the other semantic properties, apart from (LLE), that requires an extra change as for (Ref) and (Sup): we need to extend $f_S$ imposing $f(C)=f_S(C)\cup\min_{\leq}(f_S(A)^{\uparrow} \cap f_S(B)^{\uparrow})$ to any $C$ s.t. $C\equiv (A\lor B)$ for some disjunction $A\lor B$. As for (Ref) and (Sup), this extra change does not affect the set of the satisfied conditionals and the satisfaction of the other semantic properties, which completes the prove.
\end{proof}

\section{Entailment and Future Work}\label{sect_ent}

\mycomment{Umberto: say something more that entailment can be checked also via reasoning patterns/structural properties allowed}

\mycomment{Future work: also address other reasoning patterns/structural properties such as Rational Monotonicity (RM)}

Most of the results in this paper are \emph{representational} ones showing how conditional interpretations are appropriate for modelling different forms of closure. The next step is the definition of an actual reasoning systems in this framework: we start from a finite set of conditionals $\KB=\{A_1 \cond B_1, \ldots, A_n \cond B_n\}$, and we would like to derive new conditionals according to reasoning patterns satisfied, or, more generally, according to some predefined functions $f$ and $g$. In this preliminary report, we  present only intuition behind our approach that aims at modelling conditionals entailed by predefined functions $f$ and $g$.

To do so, we consider the following example for illustrative purposes, showing how one may derive new conditionals, for instance, under (Ref) and (Cut). 

\begin{example}\label{ex_reasoning}
Let  $\KB=\{\mathtt{feline}\cond \mathtt{carnivore},\ \mathtt{feline}\land \mathtt{carnivore}\cond \mathtt{mammal}\}$ (we  use only the initials of the propositional letters in what follows). The conditionals in $\KB$ represent the information an agent is aware of. That is, if $A\cond B\in \KB$ then the agent is aware that $B$ is a relevant effect of $A$ and $A$ is a relevant condition for $B$. Formally, this translate into a model $\I=(f,g)$ where, for every $A$,
\begin{eqnarray*}
f(A) & \defined & \{B\mid A\cond B\in \KB\} \\
g(A) & \defined & \{B\mid B\cond A\in \KB\} \ .
\end{eqnarray*}

\nd Hence in the present case we have $f(\mathtt{f})=\{\mathtt{c}\}$, $f(\mathtt{f}\land \mathtt{c})=\{\mathtt{m}\}$ and $f(A)=\emptyset$ for any other formula $A$; $g(\mathtt{c})=\{\mathtt{f}\}$, $g(\mathtt{m})=\{\mathtt{f}\land \mathtt{c}\}$ and $g(A)=\emptyset$ for any other formula $A$. This model satisfies only the conditionals in $\KB$, and in order to impose the closure under (Ref) and (Cut), we impose the satisfaction of (Ref$_\I$) and (Cut$_\I$) by extending $f$ and $g$ into, respectively, $f'$ and $g'$: in order to satisfy (Ref$_\I$) we add $A$ to $f(A)$ and $g(A)$ for every formula $A$, while to satisfy (Cut$_\I$) we need to add $\mathtt{m}$ to $f(\mathtt{f})$ (for condition 1.) and $\mathtt{f}$ to $g(\mathtt{m})$ (for condition 2.). Hence, we end up with the model $\I'=(f',g')$ with $f'(\mathtt{f})=\{\mathtt{c},\mathtt{m},\mathtt{f}\}$, $f(\mathtt{f}\land \mathtt{c})=\{\mathtt{m}, \mathtt{f}\land \mathtt{c}\}$ and $f(A)=\{A\}$ for any other formula $A$; $g(\mathtt{c})=\{\mathtt{f},\mathtt{c}\}$, $g(\mathtt{m})=\{\mathtt{f}, \mathtt{f}\land \mathtt{c}, \mathtt{m}\}$ and $g(A)=\{A\}$ for any other formula $A$. To determine which conditionals are satisfied by $\I'$, we have to look for `triangles' (see Fig.~\ref{fig_1}) that occur under $f'$, $g'$ and $\leq$. 
In this case, one may verify that indeed $\I'$ satisfies also $\mathtt{f}\cond \mathtt{m}$, \ie, 
$\I' \sat \mathtt{f}\cond \mathtt{m}$ (``a feline is a mammal") and all the reflexive conditionals.
\end{example}

\nd Therefore, the main idea to formalise reasoning is, given a knowledge base $\KB$, to build a model characterising $\KB$ and then to modify its $f$ and $g$ according to the reasoning patterns we would like to implement. The first step is to define closure operations over $f$ and $g$ that result into the smallest extension of $\KB$ satisfying the desired properties, in line with classical Tarskian approach to entailment. This is the approach taken in Example~\ref{ex_reasoning}, that is compatible with the structural proprieties we have considered here: all of them can be used also as derivation rules, and are compatible with the existence of a single smallest closure.

The following step would be the definition of forms of reasoning that are stronger from the inferential point of view, looking at more complex structural properties that allow for multiple smallest closed extensions. This would be in line with some popular approaches for modelling defeasible reasoning  using possible-worlds semantics: they take under consideration more complex structural properties like  \emph{Rational Monotonicity}, and define the entailment relations referring to specific semantic constructions~\cite{LehmannMagidor1992,Lehmann1995,Pearl1990,CasiniEtAl2019b}.

Beyond the development of decision procedures built on top of this semantics, we would also like to point out the flexibility of our approach. In particular working on the variation of two aspects: 
\ii{i} the configuration of the satisfaction relation; and 
\ii{ii} the interpretation of the relation $\leq$. 

\begin{figure}[tb]
  \centering
    \includegraphics[width=0.4\textwidth]{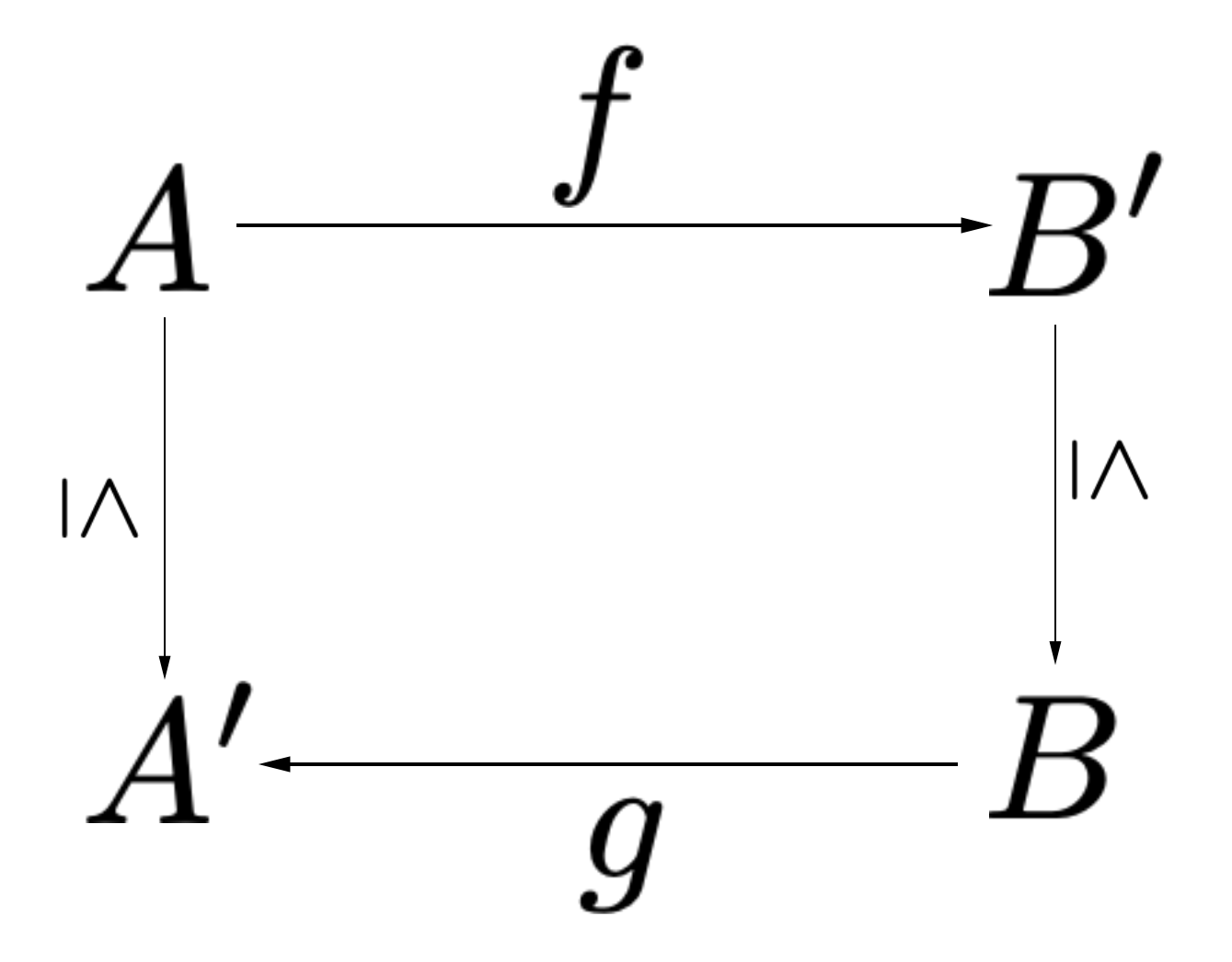}
    \caption{Alternative configuration for  $\I \sat A\cond B$.}\label{fig2}
\end{figure}

For example, we have also considered the satisfaction relation of a conditional $A\cond B$ based on the rectangle in Fig.~\ref{fig2}, which extends the one based on triangle  illustrated in Fig.~\ref{fig_1}: namely, $\I \sat A\cond B$ iff  there is a ``rectangle'' $A \xrightarrow{f}B'\leq B\xrightarrow{g} A' \geq A$.  Now, in case $\leq$ is  transitive, as it is if $A\leq B$ is interpreted as $\models A \limp B$, such a configuration imposes the closure under the following property (proof omitted):
\begin{equation*}
\frac{ A\cond B, \ \ C\cond D, \ \ A\leq C, \ \ B\leq D}{  A\cond D }    
\end{equation*}
\nd Such a property may be counter-intuitive as it imposes implicitly a form of restricted (Mon) and (RW) that is not always desired. However such a reasoning pattern may become interesting if, for example, we interpret $A\leq C$ as stating that $C$ is \emph{similar} to $A$ instead: from $A\cond B$, $C\cond D$, $C$ similar to $A$, and $D$ similar to $B$ we derive $A\cond D$.
This kind of reinterpretation of $\leq$ would allow the analysis of totally different kinds of reasoning, depending on the meaning of $\leq$ and its properties, such as \eg~reflexivity, constrained forms of transitivity or symmetry.

We are looking forward to investigate entailment procedures and interpretation variants of $\leq$ in more detail.

\section{Conclusions}\label{concl}

There have been a few attempts to formalise non-classical forms of conditional reasoning that do not satisfy properties, like (RW), that are endemic in the possible-worlds semantics, \eg~\cite{CasiniEtAl2019,Rott2019}. The approach we consider here is quite different from that usually found in the literature, as our semantics  renounces the use of possible worlds: reasoning is  modelled through the manipulation of the choice functions $f$ and $g$, which we believe, is more flexible than the possible-worlds approach. Clearly, if we consider forms of reasoning that satisfy at least (LLE), (RLE), and (Anti-RW), we may revert also to the possible-worlds framework as presented in~\cite{CasiniEtAl2019}. The relationship between that semantics and the present one still needs to be investigated, however. Beside, let us note that another system, a deontic one, that satisfies implicitly (only) (RLE) has been presented by Parent and van der Torre \cite{XavierVandertorre2014}, and is based on the semantics of I/O logics \cite{Makinson2000}.

In summary, in this preliminary work, we have only started to investigate conditionals $A \cond B$ via the manipulation of the set-valued functions $f$ (the relevant effects of $A$) and $g$ (relevant conditions for $B$). 
Moreover, as mentioned in Section~\ref{sect_ent}, we think that by modifying the interpretation and the properties of $\leq$ the present semantics also paves the way to accommodate and analyse various other different kinds of non-classical reasoning.


\section*{Acknowledgments}
\nd This research was  partially supported by TAILOR (Foundations of Trustworthy AI – Integrating Reasoning, Learning and Optimization), a project funded by EU Horizon 2020 research and innovation programme under GA No 952215. 

\bibliographystyle{abbrv}
\bibliography{references}

\end{document}